\DeclareMathOperator*{\argmax}{arg\,max}
\DeclareMathOperator*{\argmin}{arg\,min}
\newtheorem{proposition}{Proposition}
\newcommand{\hypos}{Y_\mathrm{ref}}
\newcommand{\bE}{\mathop{\mathbb{E}}}
\newcommand{\klrbon}{RBoN$_\text{KL}$}
\newcommand{\wdrbon}{MBR-BoN}
\title{Regularized Best-of-N Sampling with Minimum Bayes Risk Objective \\ for Language Model Alignment}
\author{Yuu Jinnai, Tetsuro Morimura, Kaito Ariu, Kenshi Abe \\
CyberAgent\\
Tokyo, Japan \\
\texttt{\{jinnai\_yu,morimura\_tetsuro,kaito\_ariu,abe\_kenshi\}@cyberagent.co.jp} \\}
\begin{document}
\maketitle

\begin{abstract}
Best-of-N (BoN) sampling with a reward model has been shown to be an effective strategy for aligning Large Language Models (LLMs) to human preferences at the time of decoding.
BoN sampling is susceptible to a problem known as \textit{reward hacking} when the accuracy of the reward model is not high enough due to the quality or the quantity of the preference dataset. Because the reward model is an imperfect proxy for the true objective, over-optimizing its value can compromise its performance on the true objective.
In this research, we propose MBR-BoN, a variant of BoN that aims to mitigate reward hacking at inference time by incorporating the Minimum Bayes Risk (MBR) objective as a proximity regularization term.
We show empirically and analytically that the MBR objective quantifies the proximity of the response to the reference policy, serving as a proximity regularizer.
We evaluate MBR-BoN on the AlpacaFarm and Anthropic's hh-rlhf datasets and show that it outperforms both BoN sampling and MBR decoding. 
We also evaluate MBR-BoN to generate a pairwise preference learning dataset for Direct Preference Optimization (DPO). Empirical results show that models trained on a dataset generated with MBR-BoN outperform those with vanilla BoN.
Our code is available at \url{https://github.com/CyberAgentAILab/regularized-bon}.
\end{abstract}

\section{Introduction}

Language model alignment is a widely used technique for optimizing the behavior of Large Language Models (LLMs) to human preferences, steering the models to generate informative, harmless, and helpful responses \citep{ziegler2020finetuning,stiennon2020,NEURIPS2022_b1efde53}.
\textbf{Best-of-N (BoN) sampling} is widely used to align the LLM at decoding time \citep{stiennon2020,nakano2022webgpt}. BoN samples $N$ responses from the language model and selects the best response according to the proxy reward model as the output of the system.

However, BoN sampling is known to suffer from the \textit{reward hacking} problem \citep{amodei2016concrete,ziegler2020finetuning,stiennon2020,skalse2022defining,pmlr-v202-gao23h}. 
The reward hacking is defined as a phenomena where the learning agent overfits to the misspecified reward model, failing to optimise for the true intended objective \citep{pan2022the,lambert2024alignment}.
The problem occurs because of reward misspecification; the proxy reward trained from a human preference dataset of a limited quality or quantity does not perfectly reflect true human preferences. As a result, optimizing for the reward model does not always optimize for the preference of the true intended objective.
For example, \citet{NEURIPS2023_5fc47800} shows that with 25\% label noise, which is the amount of disagreement observed in real-world preference annotations \citep{stiennon2020,NEURIPS2022_b1efde53}, BoN sampling degrades performance with $N$ greater than 16 (Figures 12 and 13 in \citealt{NEURIPS2023_5fc47800}). 
\citet{wen2024rethinking} shows that even when the proxy reward model performs reasonably well relative to the reference model, it still exhibits overoptimization behavior. 
We also observe the degradation of performance with $N$ greater than 32 when the amount of train data for the proxy reward model is limited (Appendix~\ref{apd:overoptimization}).

Given that human preferences depend on the domain, language, culture, and various other factors of the users \citep{hu-etal-2023-decipherpref,Wan_Kim_Kang_2023,li2024personal,DBLP:conf/aaai/SorensenJHLPWDL24,li-etal-2024-land,afzoon2024persobench,agrawal2024modeling}, it is desirable to develop a method that is robust to the situation where the reward model is misspecified due to limited quality and/or quality of the preference dataset.
A common approach to mitigate reward hacking in preference learning is to add a proximity regularization term to the loss function to keep the trained model close to the reference model \citep{stiennon2020,NEURIPS2022_b1efde53,rafailov2023direct}.
Previous work in BoN has shown that reducing the number of samples $N$ mitigates the reward hacking \citep{nakano2022webgpt,pan2022the,lambert2024alignment}. This approach successfully increases the proximity to the reference policy \citep{nakano2022webgpt,beirami2024theoretical} but at the expense of diminished improvement obtained by the method. 

To this end, we propose \textbf{\wdrbon{}}, a method that introduces the Minimum Bayes Risk (MBR) objective \citep{kumar-byrne-2002-minimum,kumar-byrne-2004-minimum,eikema-aziz-2020-map} as a proximity regularization term into the BoN to mitigate the reward hacking problem.\footnote{\wdrbon{} was referred to as RBoN$_\text{WD}$ in an earlier version of this manuscript.}
The MBR objective serves as a proximity regularizer by its nature which we show in Section~\ref{sec:regularizer}.
Instead of optimizing the raw reward score, we optimize a sum of the reward score and a regularization term. \wdrbon{} can tune the regularization strength by the hyperparameter $\beta$, similar to the proximity regularization in RLHF and DPO.

We evaluate the performance of \wdrbon{} on the AlpacaFarm \citep{NEURIPS2023_5fc47800} and Anthropic's hh-rlhf datasets \cite{bai2022training} and show that it outperforms the performance of vanilla BoN in a wide range of settings.
We also use \wdrbon{} to generate a pairwise preference learning dataset and show that a model trained by DPO on a dataset generated with \wdrbon{} outperforms a model trained on a dataset generated with vanilla BoN.

\section{Background}
First, we give an overview of preference learning algorithms including RLHF and DPO. Then we introduce the decoding-time alignment algorithm, BoN sampling.

\subsection{Preference Learning}

Let $\mathcal{D}$ be a set of instruction, response pair, and preference over response pair: $\mathcal{D} = \{x^{(i)}, y^{(i)}_w, y^{(i)}_l\}_{i=1}$.
RLHF uses the learned reward function to train the language model. Typically, the RL process is formulated as the following optimization problem:
\begin{align}
    \argmax_{\pi} \bE_{x \sim \mathcal{D}} &\bE_{y \sim \pi(y | x)}[R(x, y)] \nonumber\\
    &- \beta \mathbb{D}_\mathrm{KL}[\pi(\cdot | x) || \pi_\mathrm{ref}(\cdot | x)], \label{eq:ppo-loss}%
\end{align}
where $\beta$ is a hyperparameter that controls the proximity to the base reference model $\pi_\mathrm{ref}$.
The proximity regularization term $\mathbb{D}_\mathrm{KL}$ is important to prevent the model from deviating too far from the base model.
Since the objective is not differentiable, reinforcement learning algorithms are used for optimization \citep{schulman2017proximal,stiennon2020,bai2022training,NEURIPS2022_b1efde53,zheng2023secrets}.

DPO trains the language model to align directly with the human preference data over the responses, so it doesn't need a separate reward model \citep{rafailov2023direct}.
Although DPO is based on supervised learning rather than reinforcement learning, it uses essentially the same loss function under the Bradley-Terry model \citep{bradley1952rank}.
The objective function of the DPO is the following:
\begin{align}
    \argmax_{\pi} \bE_{(x, y_w, y_l) \sim \mathcal{D}}[\log \sigma (\beta& \log \frac{\pi(y_w | x)}{\pi_\mathrm{ref}(y_w | x)} - \nonumber\\
    &\beta \log \frac{\pi(y_l | x)}{\pi_\mathrm{ref}(y_l | x)})],
\label{eq:dpo}
\end{align}
where $\sigma$ is the sigmoid function.
Several variants of DPO also use KL-divergence as proximity regularization \citep{azar2023general,liu2024lipo}.

Thus, both lines of work in preference optimization have proximity regularization in common to keep the model $\pi$ close to the reference model $\pi_\mathrm{ref}$.

\subsection{Best-of-N (BoN) Sampling}
While many methods have been proposed for learning human preferences, a simple, popular, and well-performing method for preference optimization remains Best-of-N (BoN) sampling \citep{stiennon2020,nakano2022webgpt}.
Let $x$ be an input prompt to the language model $\pi_\mathrm{ref}$. Let $\hypos$ be $N$ responses drawn from $\pi_\mathrm{ref}(\cdot | x)$.
BoN sampling selects the response with the highest reward score according to the proxy reward model $R$:
\begin{equation}   
    y_{\mathrm{BoN}}(x) = \argmax_{y \in \hypos} R(x, y).
\end{equation}
The advantages of BoN over preference learning methods are as follows.
First, BoN is simple. It does not require any additional training of the language model. While learning-based alignment methods need to train the LLM, BoN can be applied on the fly. Every time human preferences are updated, learning-based methods must retrain the LLM to adapt to them. On the other hand, BoN only requires an update of the reward model and does not require the training of the LLM, which is the most expensive process.
Second, BoN is an effective strategy in its own right. Several previous works have shown that BoN sampling can outperform learning-based alignment methods \citep{pmlr-v202-gao23h,eisenstein2023helping,mudgal2024controlled,gui2024bonbon}. 
Third, BoN is applicable to a black-box model where fine-tuning is not available. BoN does not require access to the model itself and is applicable using the output sequences from the black-box model. 
In summary, BoN is a practical and efficient alignment strategy that complements the shortcomings of learning-based strategies and is worthy of investigation.

\subsection{Minimum Bayes Risk Decoding}
\textbf{MBR decoding} \citep{kumar-byrne-2002-minimum,kumar-byrne-2004-minimum,eikema-aziz-2020-map,bertsch-etal-2023-mbr} has recently gained attention as an effective decoding strategy in a variety of tasks including machine translation, text summarization, text simplification, and reasoning \cite{eikema-aziz-2020-map,eikema-aziz-2022-sampling,freitag-etal-2022-high,suzgun-etal-2023-follow,bertsch-etal-2023-mbr,heineman2024improving,li2024agents,deguchi2024mbrs}.

MBR decoding consists of the following steps. First, it samples $N$ sequences from the model ($\hypos$), similar to BoN sampling. Then, it computes the utility $U$ (e.g., similarity) between each pair of sequences in $\hypos$. Finally, it selects the sequence that maximizes the average utility between the rest of the sequences:
\begin{equation}
    y_\mathrm{MBR}(x) = \argmax_{y \in \hypos} \sum_{y' \in \hypos} \frac{1}{N} U(y, y'),
\label{eq:mbr}
\end{equation}
where \textbf{the summation represents the Bayes risk, which we refer to as the MBR objective in this work}.
MBR decoding is based on the concept of Bayes risk minimization which originates from the decision theoretic framework (\citealt{goel2000minimum}; \citealt{bickel2015mathematical}, p.27-28). Instead of selecting the output with the highest probability (maximum-a-posteriori decoding; \citealt{stahlberg-byrne-2019-nmt,Holtzman2020The}), Bayes risk minimization selects the output that is robust to the inaccuracy of the probability model \citep{meister-etal-2022-high,eikema-2024-effect}.
Bayes risk minimization is instead formalized as expected utility maximization as utility functions are more common in text generation tasks.

An alternative view of the MBR decoding is that it selects the most centered point (medoid; \citealt{rdusseeun1987clustering}) in $\hypos$ where the utility function $U$ measures the similarity between the data points \citep{jinnai2024hyperparameterfree}.
In other words, the MBR objective quantifies the proximity of the data point to the rest of the samples. 

\section{Minimum Bayes Risk Objective is a Proximity Regularizer}
\label{sec:regularizer}
Although BoN sampling is shown to be effective with a decent reward model, it is prone to the reward hacking problem under less accurate reward models \citep{NEURIPS2023_5fc47800,wen2024rethinking}.
A naive approach to prevent reward hacking is to introduce a proximity regularizer to the BoN sampling in the form of a KL-divergence term, as is common in preference learning methods \citep{stiennon2020,NEURIPS2022_b1efde53,rafailov2023direct}. However, we observe that this strategy does not improve over BoN in most cases (Appendix~\ref{apd:klrbon}). 

To this end, we propose to use the MBR objective as a proximity regularizer. 
We first show an empirical observation in Section~\ref{sec:reg-eval} that the MBR objective is correlated with the proximity of the reference policy. Then, we show an analytical result in Section~\ref{sec:reg-wd} that the MBR objective corresponds to the Wasserstein distance \citep{peyre2020computational,villani-2021-changing}, indicating that the MBR objective by its nature quantifies the proximity of the text to the reference policy.

\subsection{Empirical Evaluation}
\label{sec:reg-eval}
We evaluate the effect of the MBR objective as a proximity regularizer to keep the output closer to the center of the sample distribution.
In particular, we evaluate the correlation between the MBR objective and the closeness to the center of the sample distribution.
We run an experiment using the first 1000 entries of the training split of the AlpacaFarm \citep{NEURIPS2023_5fc47800} and Anthropic's hh-rlhf \citep{bai2022training} datasets. $N=128$ responses are sampled from 
\texttt{mistral-7b-sft-beta} (Mistral) for each instruction \citep{jiang2023mistral,tunstall2023zephyr}. The MBR objective (Eq~\ref{eq:mbr}) is calculated for each sample, and normalized to the range of [0, 1]. We use a cosine similarity of the embedding computed with \texttt{all-mpnet-base-v2} (MPNet; \citealt{reimers-gurevych-2019-sentence,song2020mpnet}):
\begin{equation}
    U(y, y') = \mathrm{cos}(\mathrm{emb}(y), \mathrm{emb}(y')),
\label{eq:utility}
\end{equation}
where $\mathrm{emb}$ denotes the embedding function.
We then compute the components of the text embedding using Principal Component Analysis (PCA; \citealt{pearson1901liii}) and Independent Component Analysis (ICA; \citealt{comon1994independent}).
Since the utility matrix between samples is likely to be approximated by a low-rank matrix \citep{trabelsi2024efficient}, the first few components are likely to be sufficient to illustrate the proximity between samples in the utility space.
We interpolate the values in component space for each instruction and then compute the average over the instructions of the dataset.

\begin{figure}[tb]
    \centering
    \includegraphics[width=\columnwidth]{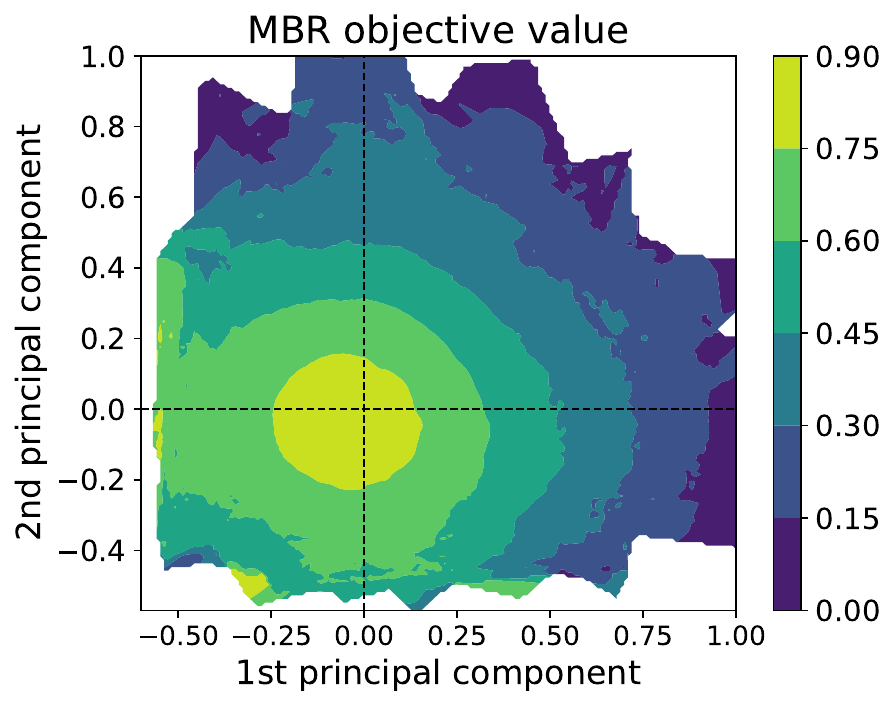}
    \caption{Mapping of the average MBR objective values to the first and the second principal components using PCA. The figure illustrates that the value of the MBR objective tends to get smaller as it moves away from the center of the distribution in the space of the principal components.}
    \label{fig:pca}
\end{figure}

\begin{table}[tb]
    \centering
    \caption{Correlation of the distance to the center point in the component space with the MBR objective (Eq~\ref{eq:mbr}) on the AlpacaFarm dataset. The mean and standard deviation of the correlation are shown in the table. The result shows that the more an output $y$ deviates from the center of the distribution, the lower the value of the MBR objective. Dim is the number of components.}
    \label{tab:pca-corr}
    \begin{tabular}{lcc}
    \toprule
    Dim  & PCA & ICA \\
    \midrule
2 & -0.5747 $\pm$ 0.1858 & -0.5621 $\pm$ 0.1830 \\
5 & -0.7494 $\pm$ 0.1329 & -0.6683 $\pm$ 0.1291 \\
10 & -0.8512 $\pm$ 0.1010 & -0.6809 $\pm$ 0.1222 \\
    \bottomrule
    \end{tabular}
\end{table}

Figure~\ref{fig:pca} shows the mapping of the average MBR objective values, with the horizontal and vertical axes showing the first and second principal components of the embeddings. Table~\ref{tab:pca-corr} shows the correlation of the distance to the center in principal component space with the MBR objective. Regardless of the dimension of the components, we observe qualitatively the same result that the correlation of the distance from the center with the MBR objective is strongly negative. On the other hand, the correlation with the log probability of the output is weak, indicating that the KL-divergence based on probability is not a reliable measure of proximity in the embedding space (Table~\ref{tab:logprob-pca} in Appendix~\ref{apd:pca}). 
The result shows that the MBR objective value becomes smaller as it moves away from the center of the distribution.
We observe the same qualitative results in Anthropic's hh-rlhf and in a machine translation dataset (WMT'21 De-En; \citealt{akhbardeh-etal-2021-findings}) which we show in Appendix~\ref{apd:pca}.

\subsection{Analytical Evaluation}
\label{sec:reg-wd}

Formally, the MBR objective corresponds to selecting the output $y$ that minimizes the Wasserstein Distance (WD; \citealt{peyre2020computational,villani2021topics}) to the sample distribution.
WD, also known as the Earth Mover's Distance (EMD; \citealt{rubner1998metric}), measures the cost required to transform one probability distribution into another. The cost function $C$ typically represents the ``distance'' or ``effort'' required to move a unit of probability mass from one location to another.
In the context of NLP, it is also called the Word Mover's Distance to evaluate the similarity between a pair of texts \citep{pmlr-v37-kusnerb15,NIPS2016_10c66082}.
For a pair of probability distributions $P$ and $Q$ over $\hypos$, WD is defined as follows:
\begin{align}
    WD&(P, Q) = \nonumber\\
    & \min_{\{\mu_{i, j}\}_{i, j} \in \mathcal{J}(P, Q)} \sum_{i = 1}^{|\hypos|} \sum_{j = 1}^{|\hypos|} \mu_{i, j} C(y_i, y_j),
\end{align}
where $C$ is the cost function that represents the dissimilarity of the elements. 
$\mathcal{J}(P, Q)$ is a set of all couplings over $P$ and $Q$ \citep{villani2021topics}:
\begin{align}
    &\mathcal{J}(P, Q) = \bigl\{ \{\mu_{i, j}\}_{i, j} : \nonumber\\
    &\sum_{i = 1}^{|\hypos|} \mu_{i, j} = Q(y_j), \sum_{j = 1}^{|\hypos|} \mu_{i, j} = P(y_i), \mu_{i, j} \geq 0 \bigr\}.
\end{align}

The objective of MBR decoding is identical to minimizing the WD to the empirical distribution of $\pi_\mathrm{ref}$.
\begin{proposition}
\label{prop:wd}
    Let the cost function $C$ for WD be $C(y, y')=-U(y, y')$ for all $y$ and $y'$. MBR decoding selects the output with the smallest WD of the sample distribution:
    \begin{align}        
        y_\mathrm{MBR}(x) &= \argmax_{y \in \hypos} \sum_{y' \in \hypos} \frac{1}{N} U(y, y') \\
        &= \argmin_{y \in \hypos} WD(\pi_y(\cdot \mid x), \hat{\pi}_\mathrm{ref}(\cdot \mid x)),
    \end{align}
    where $\pi_y$ is a policy that outputs $y$ with a probability of 1 and $\hat{\pi}_\mathrm{ref}$ is the empirical distribution constructed from $\hypos$: $\hat{\pi}_\mathrm{ref}(y \mid x) = \frac{1}{N} \sum_{y_i \in \hypos} \mathbb{I}[y_i = y]$.
\end{proposition}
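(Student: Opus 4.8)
The plan is to evaluate $WD(\pi_y(\cdot\mid x),\hat{\pi}_\mathrm{ref}(\cdot\mid x))$ for each candidate $y\in\hypos$ in closed form by exploiting that $\pi_y$ is a point mass, which forces the transport plan to be essentially unique, and then to read off that the resulting value is exactly the negative MBR objective up to the constant factor $1/N$; minimizing it is therefore the same as maximizing the MBR objective.

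First I would fix $y\in\hypos$, write $P=\pi_y(\cdot\mid x)$ and $Q=\hat{\pi}_\mathrm{ref}(\cdot\mid x)$, and argue that $\mathcal{J}(P,Q)$ is a singleton (the product coupling $P\otimes Q$). Indeed, since $P$ puts all its mass on the single element $y$, the marginal constraint $\sum_j\mu_{i,j}=P(y_i)$ forces $\mu_{i,j}=0$ for every $i$ with $y_i\neq y$; the remaining constraint $\sum_i\mu_{i,j}=Q(y_j)$ then pins down the nonzero entries. (If $y$ appears with multiplicity in the multiset $\hypos$, the mass $Q(y_j)$ can be distributed among the duplicated rows in several ways, but each such split gives the same total cost, so this does not affect the argument — I would note this parenthetically rather than dwell on it.) Evaluating the transport cost at this coupling gives
\begin{equation}
WD(\pi_y(\cdot\mid x),\hat{\pi}_\mathrm{ref}(\cdot\mid x))=\sum_{j}Q(y_j)\,C(y,y_j)=\sum_{y'\in\hypos}\frac{1}{N}C(y,y'),
\end{equation}
where the last equality uses $\hat{\pi}_\mathrm{ref}(y_j\mid x)=\frac{1}{N}\sum_{y_i\in\hypos}\mathbb{I}[y_i=y_j]$, so that summing $Q(y_j)C(y,y_j)$ over the support equals summing $\frac{1}{N}C(y,y')$ over all $N$ sampled responses. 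Substituting the hypothesis $C(y,y')=-U(y,y')$ turns this into $-\frac{1}{N}\sum_{y'\in\hypos}U(y,y')$.

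Finally I would conclude by observing that the map $y\mapsto WD(\pi_y(\cdot\mid x),\hat{\pi}_\mathrm{ref}(\cdot\mid x))=-\frac{1}{N}\sum_{y'\in\hypos}U(y,y')$ is precisely the negation of the MBR objective in Eq.~\ref{eq:mbr}, so $\argmin_{y\in\hypos}WD(\pi_y,\hat{\pi}_\mathrm{ref})=\argmax_{y\in\hypos}\sum_{y'\in\hypos}\frac{1}{N}U(y,y')=y_\mathrm{MBR}(x)$. There is no deep obstacle here; the only step needing care is the claim that a coupling out of a Dirac measure is (cost‑)unique, together with the bookkeeping of treating $\hypos$ as a multiset with possibly repeated elements, and that is the one place I would spell out the details.
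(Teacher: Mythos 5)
Your proof is correct and follows essentially the same route as the paper's derivation in Appendix~\ref{apd:wd}: both exploit that $\pi_y$ is a point mass to force $\mu_{i,j}=0$ off the row of $y$, conclude that the coupling is uniquely $\mu_{y,j}=\hat{\pi}_\mathrm{ref}(y_j\mid x)$, and then substitute the empirical distribution and $C=-U$ to recover the negated MBR objective. Your parenthetical remark on duplicated elements of $\hypos$ is a small point of care that the paper's version glosses over, but it does not change the argument.
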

\begin{proof}
The proof is in Appendix~\ref{apd:wd}.    
\end{proof}
The proposition shows that the MBR objective measures the WD of the output selection strategy to the sample distribution of the reference policy. Maximizing the objective results in selecting an output that is closest to the sample distribution of the reference policy with respect to the utility function $U$.

\paragraph{Summary.}
Both the empirical and analytical results show that the MBR objective serves as a proximity regularizer to penalize an output that is less representative of the samples from the reference policy, as measured by the utility function.

\section{MBR-Best-of-N (MBR-BoN) Sampling}
\label{sec:mbrbon}

We propose \textbf{MBR-Best-of-N (MBR-BoN) sampling}, a variant of BoN sampling with an MBR objective as the proximity regularizer, to mitigate the reward hacking problem of BoN sampling.
\wdrbon{} uses the MBR objective as the proximity regularizer: 
\begin{align}
    y&_{\textrm{\wdrbon}}(x) = \nonumber\\
    &\argmax_{y \in \hypos} R(x, y) + \beta \sum_{y' \in \hypos} \frac{1}{N} U(y, y'),
\label{eq:wdbon}
\end{align}
where $\beta$ is a hyperparameter to adjust the strength of the regularization.
As the MBR objective corresponds to the WD between the resulting policy and the reference policy (Section~\ref{sec:regularizer}), it serves as a proximity regularizer to ensure that the resulting policy is close to the reference policy $\pi_\mathrm{ref}$.

The hyperparameter $\beta$ controls the tradeoff between the reward and proximity to the reference model.
Using a small $\beta$ makes the output more aligned with the proxy reward, with $\beta=0$ recovering vanilla BoN sampling. A larger $\beta$ makes the output closer to the behavior of the reference model $\pi_\mathrm{ref}$, with $\beta=+\infty$ recovering MBR decoding. 

\paragraph{Advantage of WD over KL-divergence.}
WD is a more suited regularizer than KL-divergence for inference-time algorithms where the number of samples is very small. While KL-divergence is useful for training-time alignment algorithms, it poses several challenges for inference-time algorithms with limited samples.

Theoretically, any high confidence lower bound on KL-divergence requires a sample size exponential in the value of KL-divergence \cite{mcallester2020formal}. This suggests that estimating KL-divergence is unreliable in finite-sample settings. For example, for the first instance of the AlpacaEval instruction (\textit{What are the names of some famous actors that started their careers on Broadway?}), the KL-divergence of the randomly sampled 128 responses from Mistral has a minimum of 627, a maximum of 5870, a mean of 1854, and a standard deviation of 1039.

Moreover, KL-divergence is sensitive to small differences in the sequences. Specifically, KL-divergence can be large even if the underlying sequences differ very little. For example, the two sentences: ``\textit{Yes I will do it.}'' and ``\textit{Yes I'll do it.}'' are considered completely different data instances when computing KL-divergence. Conversely, WD considers them to be quite similar data instances. This is because the WD uses the utility function to quantify the divergence and represents the difference between two distributions in terms of the semantic distance between the sequences. This makes the WD a more robust measure against the minor variances that naturally occur in natural language texts.
See Appendix \ref{apd:klrbon} for experimental evaluation of using KL-divergence as a regularization term.

In addition to being a good proximal regularizer, the MBR objective is a useful text generation objective in its own right.
The objective is shown to be effective, outperforming MAP decoding in a variety of text generation tasks, including instruction-following task \citep{suzgun-etal-2023-follow,bertsch-etal-2023-mbr,li2024agents}.

\section{Experiments}

We evaluate the performance of \wdrbon{} for two use cases. First, we evaluate the performance of \wdrbon{} for decoding time alignment (Section \ref{sec:decoding}). 
Then, we evaluate \wdrbon{} as a sampling strategy to generate a preference learning dataset to be used for DPO (Section \ref{sec:learning}). 

\subsection{\wdrbon{} for Decoding-Time Alignment}
\label{sec:decoding}

\begin{figure*}[t]
    \centering
    \subfloat[AlpacaFarm - Proxy: SHP-Large]{\includegraphics[width=0.32\textwidth]{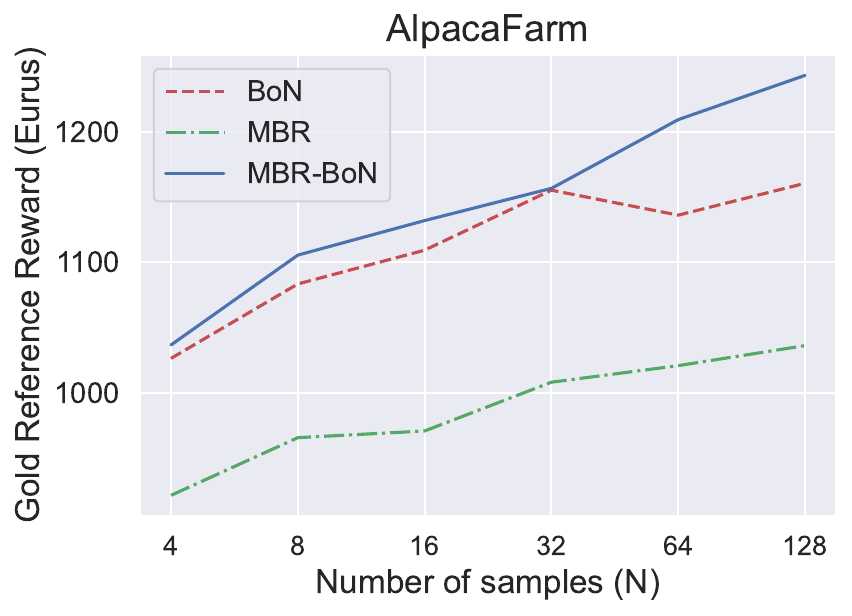}}
    \subfloat[AlpacaFarm - Proxy: SHP-XL]{\includegraphics[width=0.32\textwidth]{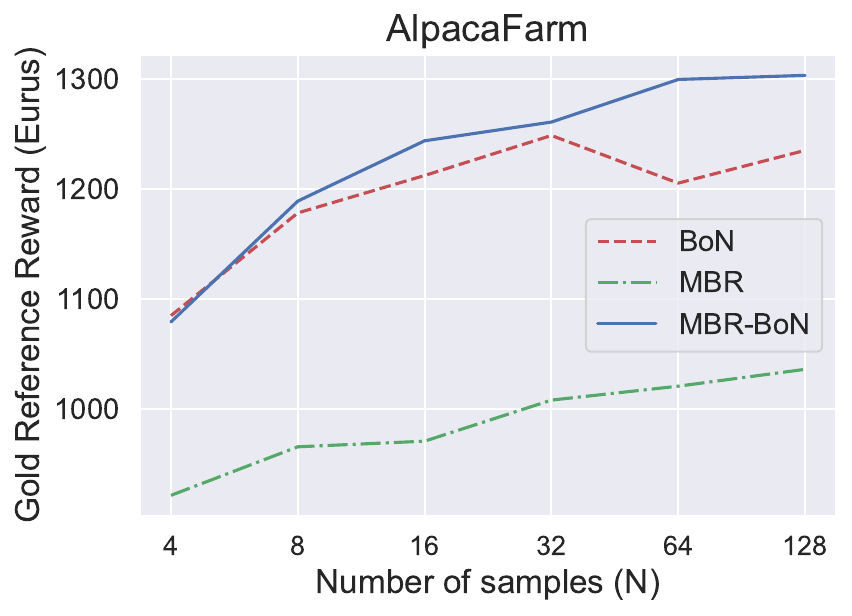}} 
    \subfloat[AlpacaFarm - Proxy: OASST]{\includegraphics[width=0.32\textwidth]{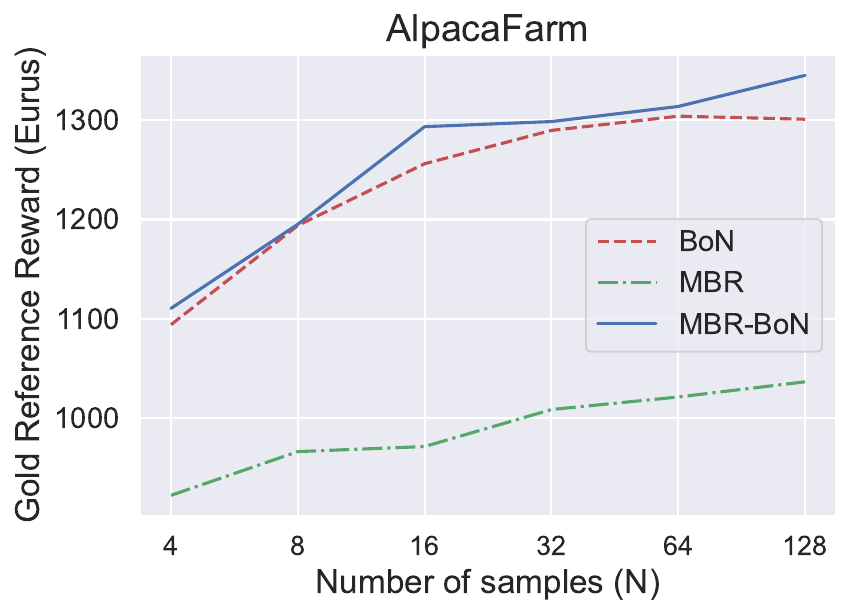}} \\
    \subfloat[Helpfulness - Proxy: SHP-Large]{\includegraphics[width=0.32\textwidth]{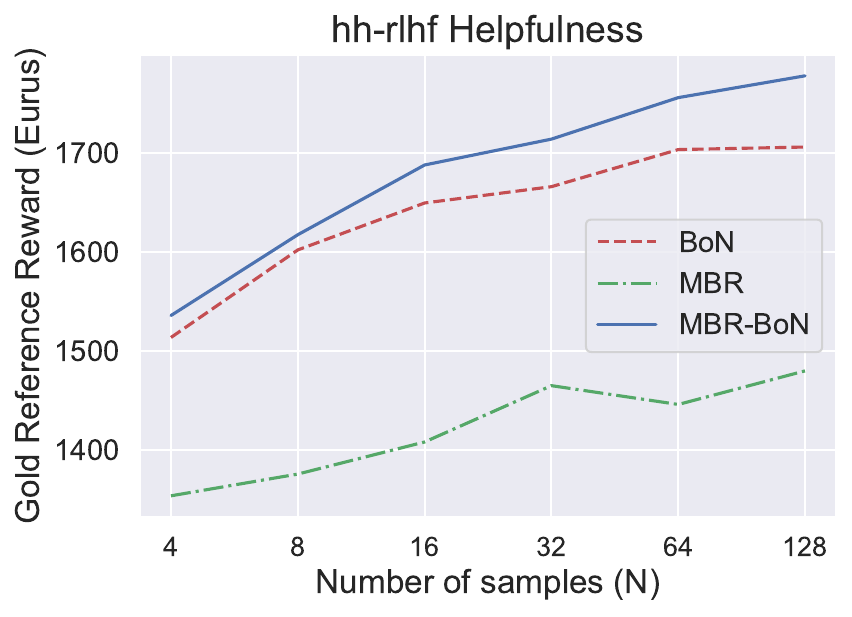}}
    \subfloat[Helpfulness - Proxy: SHP-XL]{\includegraphics[width=0.32\textwidth]{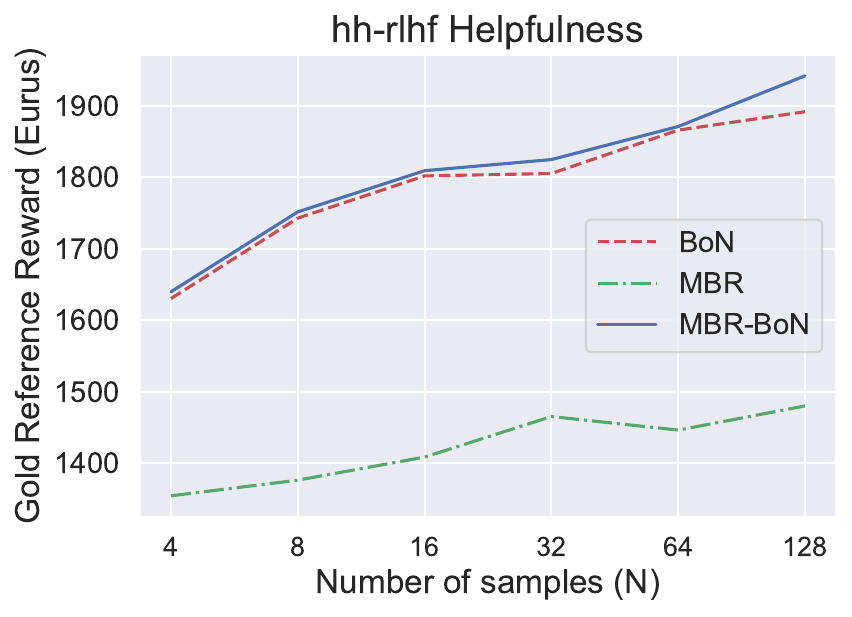}}
    \subfloat[Helpfulness - Proxy: OASST]{\includegraphics[width=0.32\textwidth]{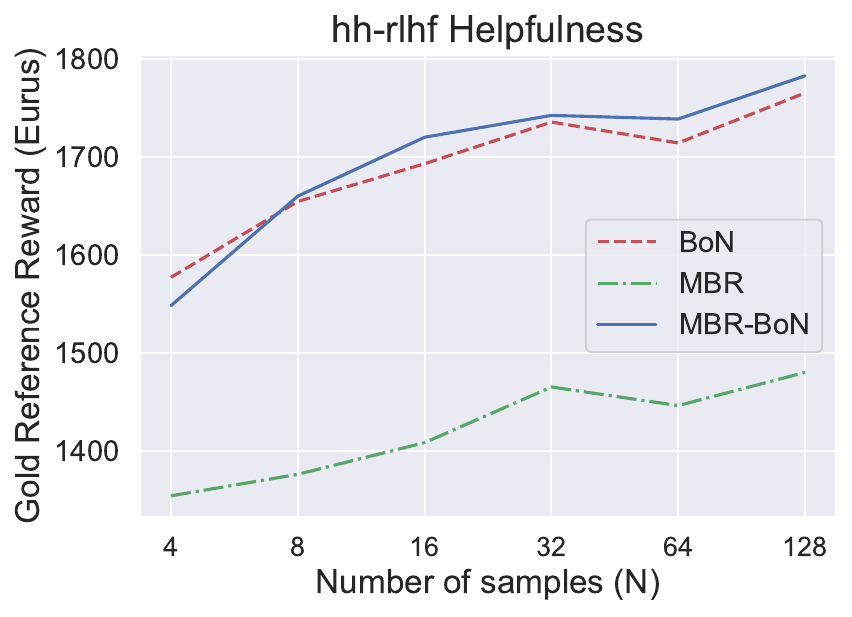}} \\
    \subfloat[Harmlessness - Proxy: SHP-Large]{\includegraphics[width=0.32\textwidth]{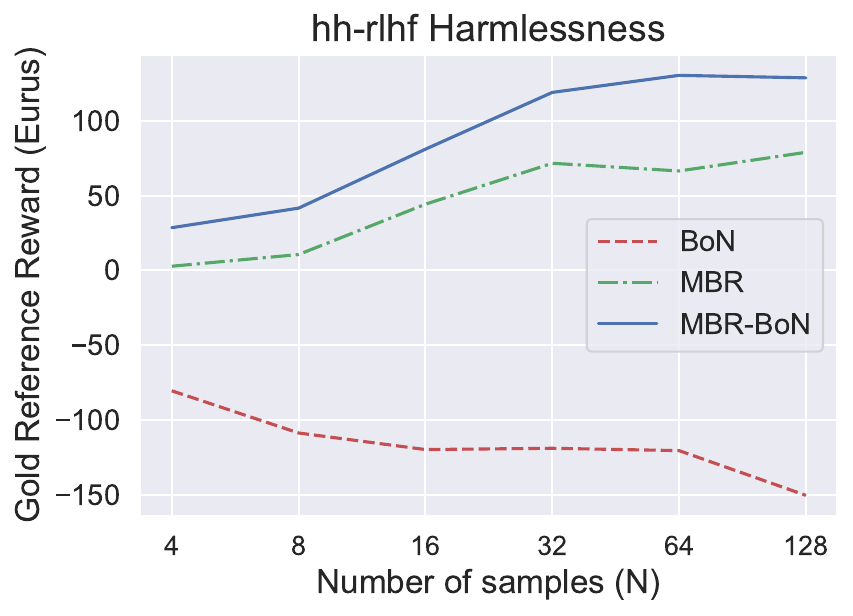}}
    \subfloat[Harmlessness - Proxy: SHP-XL]{\includegraphics[width=0.32\textwidth]{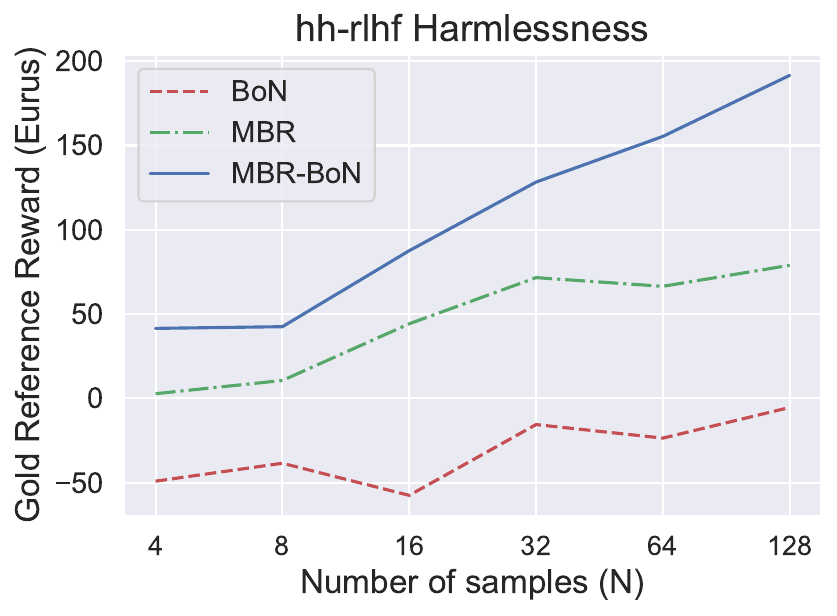}}
    \subfloat[Harmlessness - Proxy: OASST]{\includegraphics[width=0.32\textwidth]{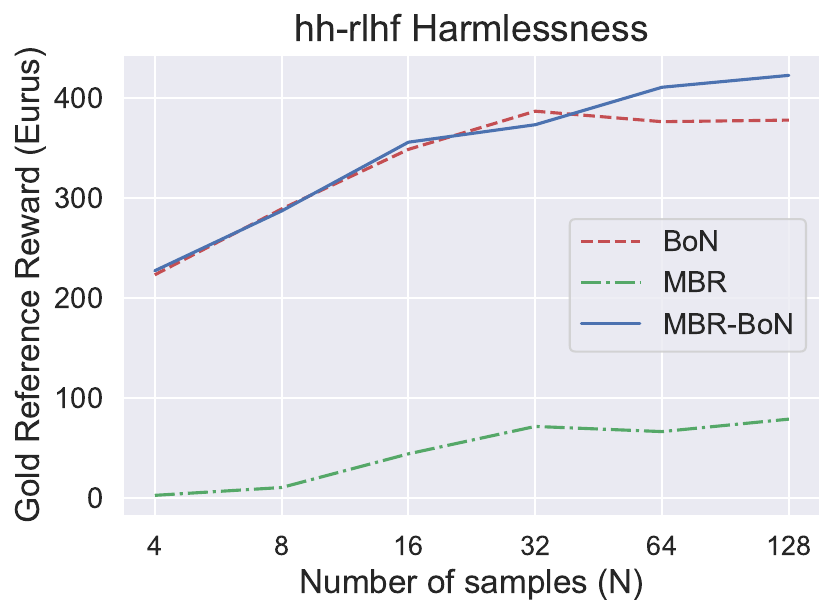}}
    \caption{Evaluation BoN, MBR, and \wdrbon{} on the AlpacaFarm, hh-rlhf Helpfullness, and hh-rlhf Harmlessness datasets. Mistral is used as the language model.}
    \label{fig:mistral-alpaca}
\end{figure*}

\begin{figure*}[t]
    \centering
    \subfloat[Proxy: SHP-Large]{\includegraphics[width=0.32\textwidth]{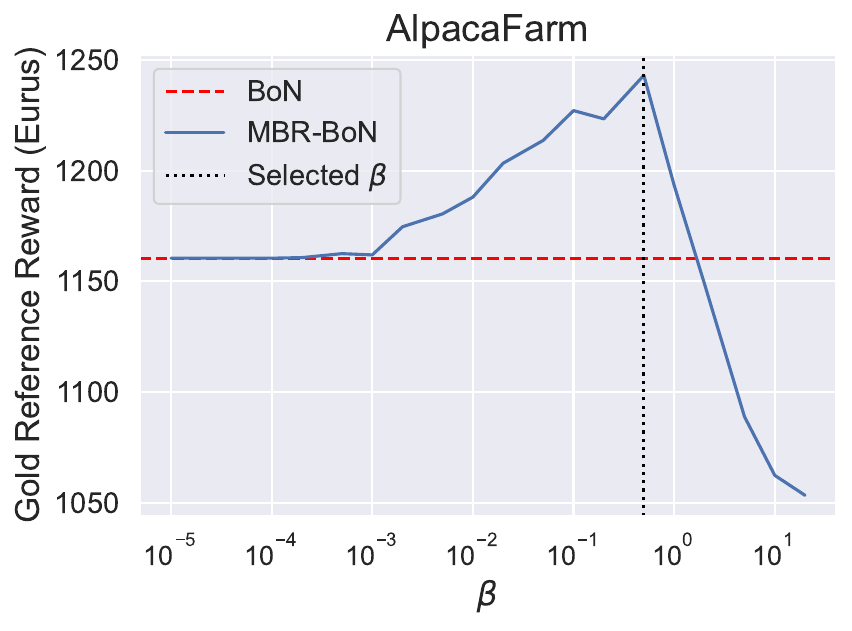}}
    \subfloat[Proxy: SHP-XL]{\includegraphics[width=0.32\textwidth]{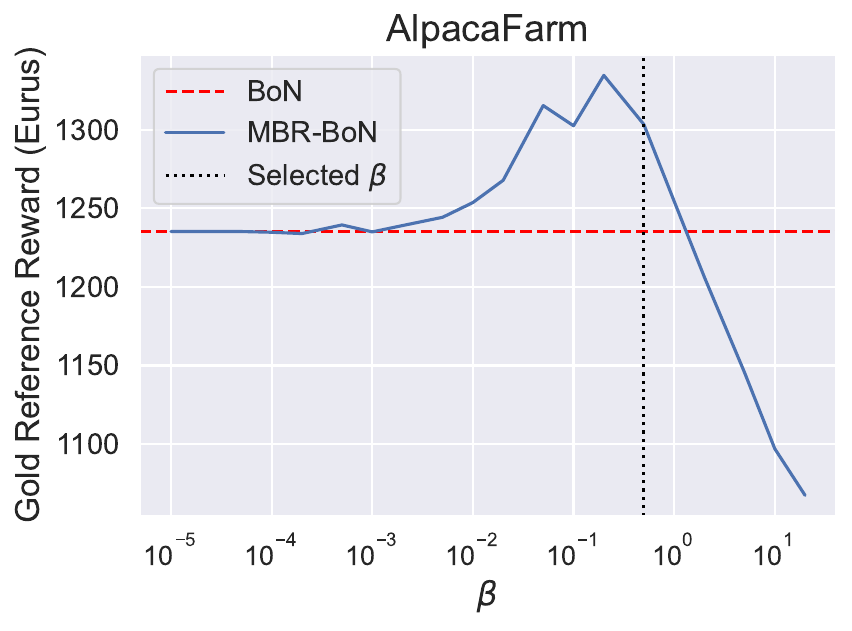}} 
    \subfloat[Proxy: OASST]{\includegraphics[width=0.32\textwidth]{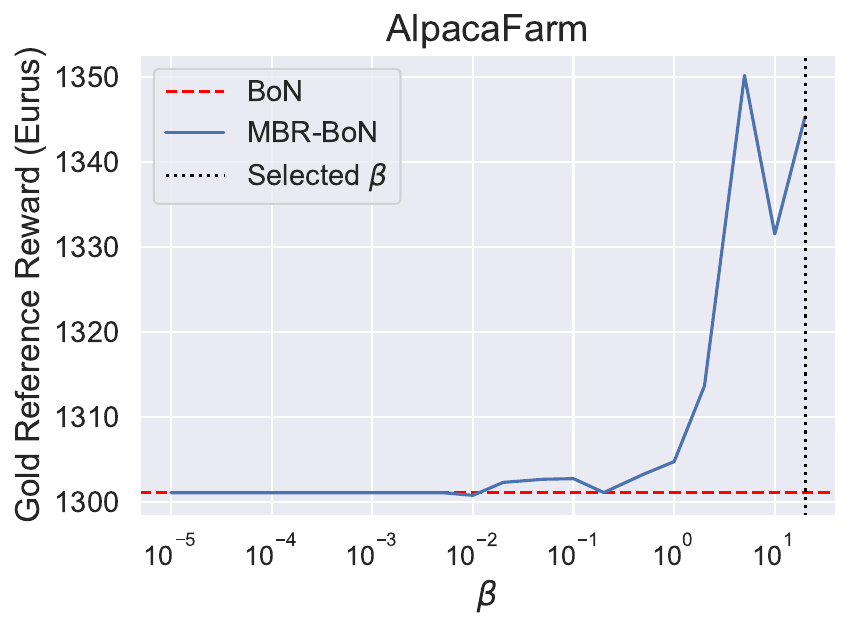}} 
    \caption{Evaluation of the \wdrbon{} using Mistral on the AlpacaFarm dataset with varying regularization strength $\beta$. The number of samples is $N=128$.}
    \label{fig:mistral-alpaca-opt}
\end{figure*}

\paragraph{Setup.}
The evaluation is conducted using the AlpacaFarm \citep{NEURIPS2023_5fc47800} and Anthropic's hh-rlhf datasets \citep{bai2022training}.
For the AlpacaFarm dataset, we use the first 1000 entries of the train split (\texttt{alpaca\_human\_preference}) as the development set and the whole evaluation split (\texttt{alpaca\_farm\_evaluation}) (805 instructions) as a test dataset.
For Anthropic's datasets, we conduct experiments on the \texttt{helpful-base} (Helpfulness) and \texttt{harmless-base} (Harmlessness) subsets separately. For each subset, we use the first 1000 entries of the train split as the development set and the first 1000 entries of the test split as a test dataset. 
We use \texttt{mistral-7b-sft-beta} (Mistral) and \texttt{dolly-v2-3b} (Dolly) as the language models \citep{jiang2023mistral,tunstall2023zephyr,DatabricksBlog2023DollyV2}.

To evaluate \wdrbon{} under various conditions, we use SHP-Large, SHP-XL \citep{pmlr-v162-ethayarajh22a}, and OASST \citep{kopf2023openassistant} as proxy reward models. 
We use Eurus as a gold reference reward model as it is one of the most accurate reward models \citep{lambert2024rewardbench,zhou2024rmb} and is open-source which makes the experiments reproducible. The results using other reward models as a gold reference are reported in Appendix~\ref{apd:trainedrm} and \ref{apx:beta}.
The average Spearman's rank correlation coefficient $\rho$ \citep{spearman1904proof} to the gold reference reward (Eurus) is reported in Table \ref{tab:correlation}. 

\begin{table}[tb]
    \centering
    \caption{Average Spearman's rank correlation coefficient of the proxy reward models to the gold reference reward model (Eurus) on AlpacaFarm.}
    \label{tab:correlation}
    \begin{tabular}{cc}
    \toprule
        Proxy reward & Correlation Coefficient \\\midrule
        SHP-Large & 0.32 \\
        SHP-XL    & 0.39 \\
        OASST     & 0.40 \\
    \bottomrule
    \end{tabular}
\end{table}

We compare the performance of BoN, MBR, and \wdrbon{}. 
We sample up to $N=128$ responses per instruction using nucleus sampling and select the output using the algorithms. We set the top-$p$ to be $p=0.9$ and the temperature to be $T=1.0$ for the nucleus sampling \citep{Holtzman2020The}. For a fair comparison, we use the same set of $N$ responses for all algorithms.
We use the Sentence BERT model \citep{reimers-gurevych-2019-sentence} based on MPNet \citep{song2020mpnet} to compute the sentence embedding for MBR and \wdrbon{}.

\wdrbon{} use the development set to select the optimal $\beta$.
For each pair of a proxy reward and a gold reference reward, we run \wdrbon{} with $\beta \in \{10^{-6}, 2 \cdot 10^{-6}, 5 \cdot 10^{-6}, 10^{-5},..., 2 \cdot 10^{1}\}$ and pick the best performing $\beta$ for $N=128$. We use the same $\beta$ for all $N$ in evaluation. See Appendix~\ref{apx:beta} and \ref{apd:devset} for the ablation study on the regularization strength $\beta$.

\paragraph{Results.}
Figure~\ref{fig:mistral-alpaca} shows the performance of BoN, MBR, and \wdrbon{} using Mistral as a language model, evaluated by Eurus score. See Appendix~\ref{apx:beta} for the result of Dolly.
Overall, \wdrbon{} outperforms BoN and MBR in most of the settings, showing that the method is effective in a wide range of tasks.
Figure~\ref{fig:mistral-alpaca-opt} shows the performance of \wdrbon{} with $N=128$ and with varying regularization strength $\beta$. The vertical line shows the $\beta$ selected using the development set.
Overall, \wdrbon{} outperforms BoN in a wide range of $\beta$ and is relatively robust to the choice of the $\beta$. 

As expected, we observe that \wdrbon{} have lower scores with respect to the proxy reward than BoN (Appendix~\ref{apx:beta}). The regularization term effectively mitigates the reward hacking of the BoN, resulting in a higher score in the gold reference score (Eurus).

\begin{table}
    \caption{The values of hyperparameter $\beta$ used by \wdrbon{} determined using the development set.}
    \label{tab:beta}
\adjustbox{max width=\columnwidth}{
    \centering
    \begin{tabular}{cccc}
    \toprule
        Dataset      & SHP-Large & SHP-XL & OASST \\\midrule
        AlpacaFarm   & 0.5 & 0.5 &  20.0 \\
        Helpfulness  & 0.05 & 0.1 & 20.0 \\
        Harmlessness & 2.0 & 2.0 & 20.0 \\
    \bottomrule
    \end{tabular}
}
\end{table}

\paragraph{Choice of Regularization strength.}
Table~\ref{tab:beta} summarizes the regularization strength $\beta$ picked using the development set. The optimal value of $\beta$ depends on the choice of the language model, dataset, and proxy reward model, which requires the use of the development set to tune the hyperparameter $\beta$. 
Still, we find that the amount of development data we need for hyperparameter tuning is small. Our post-hoc analysis on the size of the developement set shows that with as little as 10 instances it already outperforms BoN and also finds $\beta$ close to the optimal $\beta$ (Appendix~\ref{apd:devset}).
Also note that the computational cost of tuning the hyperparameter for \wdrbon{} is marginal compared to that of RLHF or DPO as it does not involve any training of the language model or reward model.
Running \wdrbon{} with different $\beta$ only requires the computation of Eq.~\ref{eq:wdbon} with the different $\beta$.

\subsection{\wdrbon{} for Generating Preference Learning Dataset}
\label{sec:learning}

\begin{figure*}
    \centering
    \subfloat[Alpaca]{
    \includegraphics[width=0.325\textwidth]{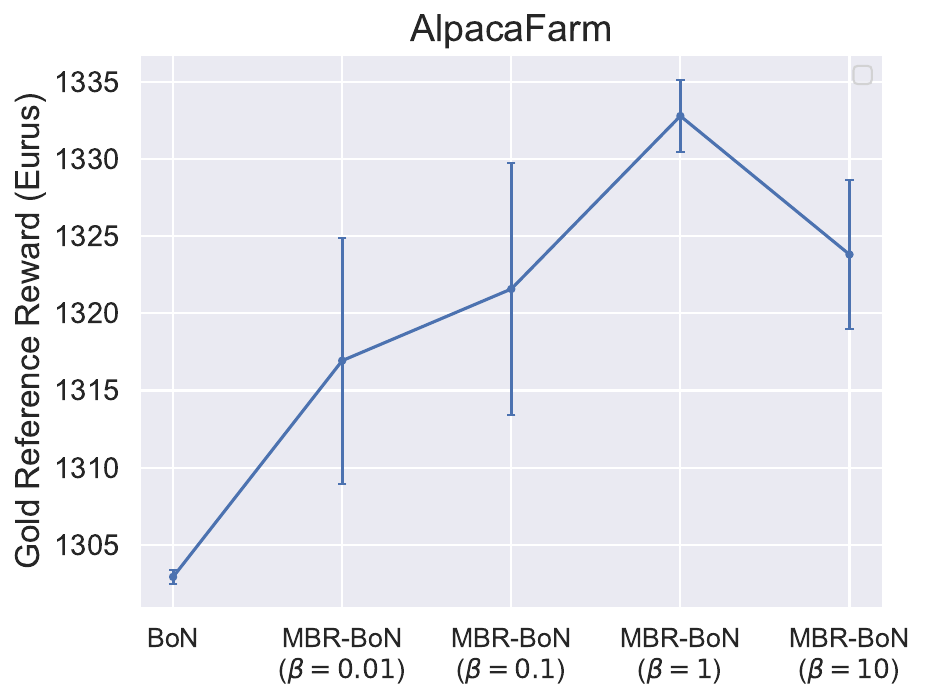}
    }
    \subfloat[Helpfulness]{
    \includegraphics[width=0.325\textwidth]{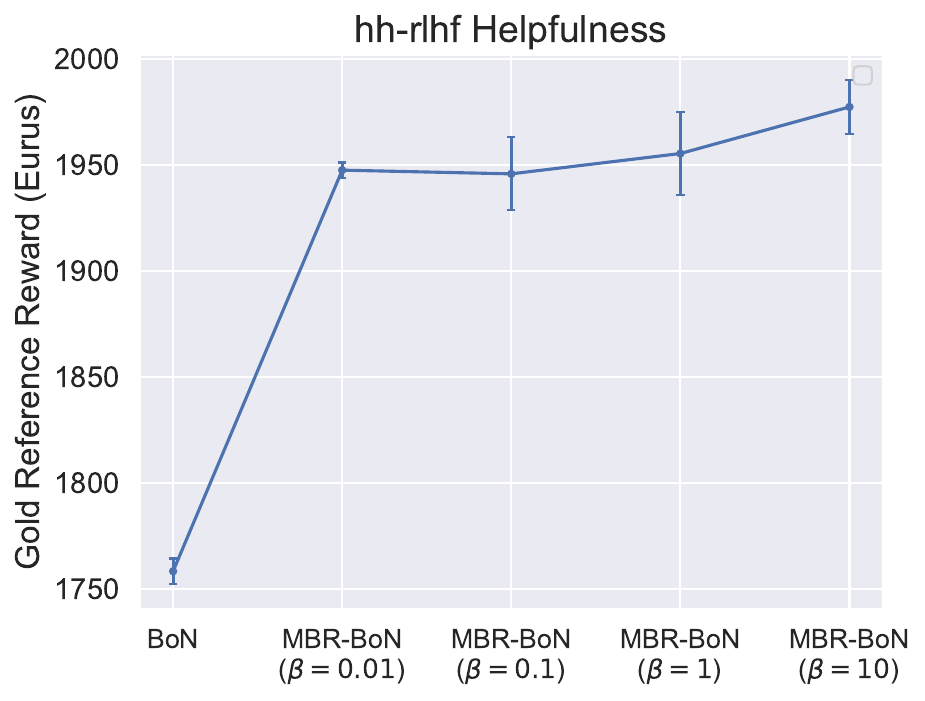}
    }
    \subfloat[Harmlessness]{
    \includegraphics[width=0.325\textwidth]{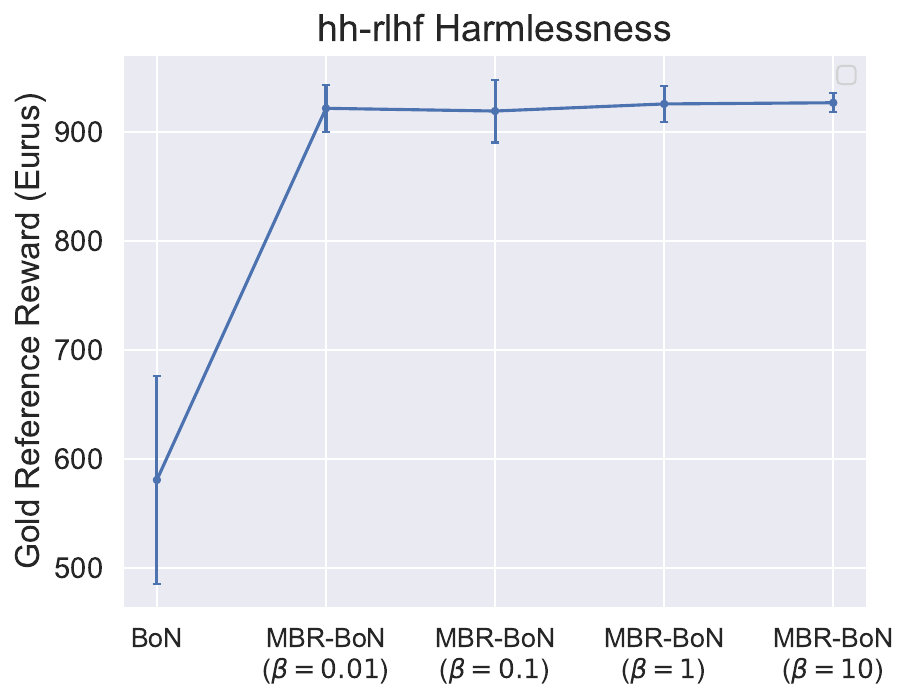}
    }
    \caption{Evaluation of the DPO using \wdrbon{} to generate the preference dataset. OASST is used as the proxy reward model to generate the preference dataset, and Eurus is used as the gold reference reward. The line represents the mean of three runs, and the error bar shows the standard error of the mean.}
    \label{fig:dpo}
\end{figure*}

Previous work has shown that BoN sampling is an effective strategy for generating an efficient preference dataset \citep{xu2023things,yuan2024selfrewarding,pace2024westofn}.
They show that the efficiency of pairwise preference learning is improved by using the best and worst responses according to the reward model as the chosen and rejected responses.
We evaluate the performance of DPO \citep{rafailov2023direct} using the response selected by \wdrbon{} as the chosen response and the response with the lowest reward score as the rejected response.
\paragraph{Setup.}
We sample 128 responses for each instruction in the training dataset and use the response selected by \wdrbon{} or BoN as the chosen response and the response with the lowest reward as the rejected response. We use all 9.69k instructions from AlpacaFarm and the first 5k instructions from each of the Helpfulness and Harmlessness subsets to train a model for the hh-rlhf datasets.
We use Mistral as the language model to generate the pairwise preference dataset and train it using the generated dataset \citep{jiang2023mistral,tunstall2023zephyr}.

OASST is used as a proxy reward model and Eurus is used for evaluation \citep{kopf2023openassistant,yuan2024advancing}.
We train a model with DPO using Low-Rank Adaptation (LoRA; \citealt{hu2022lora,sidahmed2024perl}).
The trained models are evaluated using the evaluation split of the AlpacaFarm dataset. 
Other hyperparameters are described in Appendix~\ref{apx:hyperparams}.

\paragraph{Results.}
Figure~\ref{fig:dpo} shows the performance of models trained using \wdrbon{} and BoN to generate a pairwise preference dataset. The models trained with \wdrbon{} outperform a model trained with BoN. 
According to Figure~\ref{fig:mistral-alpaca}, \wdrbon{} generates higher quality response texts than BoN with respect to the gold reference reward. We expect the models trained by DPO on the higher quality responses to achieve higher quality generation. 
In addition, \wdrbon{} generates on-policy responses that are representative of the reference policy (Section~\ref{sec:reg-eval} and Appendix~\ref{apd:pca}) which is shown to be one of the important characteristics of efficient preference datasets \citep{chang2024dataset,guo2024direct,xu2024dpo,tajwar2024preference,tang2024understanding}. 
Thus, we postulate that by generating high-quality and on-policy responses, models aligned with responses generated by \wdrbon{} outperform that of BoN.

We additionally evaluate the performance of DPO with responses generated by random sampling (i.e., BoN with $N=2$).
According to the Eurus reward model, the scores were as follows: 1140.0 for Alpaca, 1556.9 for Helpfulness, and 433.3 for Harmlessness. Both BoN and MBR-RoN significantly outperform random sampling. This result aligns with prior work showing BoN outperforming random sampling \citep{xu2023things,yuan2024selfrewarding,pace2024westofn}.

The result shows the potential of \wdrbon{} as a tool for generating pairwise preference datasets for preference learning. See Appendix~\ref{apx:gpt4} for the results using GPT-4o as an evaluator.

\section{Related Work}
\paragraph{Mitigating reward hacking at inference time.}
Using proximity regularization is not the only way to mitigate the reward hacking problem.
Several studies have explored the use of multiple rewards. \cite{mudgal2023controlled,coste2024reward,rame2024warm} propose to ensemble multiple reward functions to mitigate reward hacking.
Several studies have investigated training models by the reward functions and combining by interpolating the parameters \citep{RameCDGSSC23,jang2023personalized} or ensembling the model \citep{mitchell2024an,shi2024decodingtime}.
Our approach is applicable to any proxy reward model, so it can be combined with these methods.

\paragraph{MBR for training a model.}
Prior work has discovered that MBR decoding for LLM is useful for inference and for generating preference dataset in machine translation tasks \citep{farinhas-etal-2023-empirical,ramos-etal-2024-aligning}.
\citet{finkelstein2024mbr,guttmann-etal-2024-chasing}  uses the output generated by MBR decoding for supervised fine-tuning to improve the generation quality of a machine translation model.
\citet{yang-etal-2024-direct} trains a model by DPO to prefer outputs with higher MBR objective values than lower ones.
\citet{tomani-etal-2024-quality} trains the machine translation model to predict the quality of the generation so that it can improve its own generation using the estimate.
The novelty of our work is to introduce the MBR objective combined with BoN sampling for language model alignment, improving both the text generation and the training using the generated texts.

\section{Conclusions}

We propose \wdrbon{}, a variant of BoN sampling with MBR objective as a proximity regularizer to mitigate the reward hacking problem. We show that the MBR objective is a proximity regularizer by its nature and show it in the experiments.  
We evaluate the performance of \wdrbon{} using the AlpacaFarm and Anthropic's hh-rlhf datasets. The result shows that \wdrbon{} outperforms BoN when the proxy reward is weakly correlated with the reference objective.
As an application of the method, we also show that \wdrbon{} is an effective strategy for generating a preference dataset for DPO.

We believe that \wdrbon{} will be a practical choice for future decoding-time alignment methods because of its applicability and performance improvements.

\section{Limitations}

The drawback of the proposed method is that it requires a development set to tune the hyperparameter. Given that there is no clear strategy to pick the $\beta$ parameter even for RLHF and DPO, we speculate that it would be challenging to develop a strategy to find an effective $\beta$ automatically.
Still, the hyperparameter tuning of \wdrbon{} is much more computationally efficient than that of RLHF and DPO as it does not involve any training procedures. In fact we observe that around 10 instances are enough to find a near-optimal choice of $\beta$ (Appendix~\ref{apd:devset}).

One of the critical limitations of MBR decoding is its generation speed. It requires computing a utility function that is quadratic to the number of samples. 
\wdrbon{} inherits the same limitation because it is derived from MBR. Given that recent work \citep{cheng-vlachos-2023-faster,jinnai2024hyperparameterfree,deguchi-etal-2024-centroid,vamvas-sennrich-2024-linear} has improved the computational complexity of MBR decoding to linear in the number of samples, we are optimistic that the overhead of MBR-BoN will be reduced in the future.

We use automated evaluation metrics to evaluate the models. Although we use one of the most accurate publicly available reward models and GPT-4o to evaluate the performance of the models \cite{yuan2024advancing,lambert2024rewardbench}, it would be desirable to perform a human evaluation.

Our experiments on preference learning are limited to the evaluation of DPO. Evaluation of \wdrbon{} for other preference optimization algorithms is future work \cite{azar2023general,liu2024lipo,ethayarajh2024kto,xu2024contrastive,morimura2024filtered,hong2024orpo,meng2024simpo,park2024disentangling}.

\section{Impact Statement}
\label{apd:ethics}

We believe that this work will have a positive impact by providing a method for fine-tuning an LLM with limited annotation resources, allowing for alignment with less representative communities in language resources.
LLMs would be more useful if we could prevent them from reward hacking, even when the annotation for the task is limited.

\section*{Acknowledgments}
We thank the anonymous reviewers for their insightful comments and suggestions.
Kaito Ariu's research is supported by JSPS KAKENHI Grant No. 23K19986. 

\bibliography{ms,ms2,culture,anthology}

\appendix

\clearpage
\section{Overoptimization of BoN Sampling}
\label{apd:overoptimization}

Figure \ref{fig:overoptimization} shows the performance of BoN sampling using proxy reward models evaluated by a gold reference reward model.
The proxy reward models are based on the Pythia-1B model \citep{pmlr-v202-biderman23a} and trained using the first 1000, 2000, and 4000 entries of the training set of AlpacaFarm.
The gold reference reward model is based on the Pythia-2.8B model and trained using the entire training set (9600 entries).
Spearman's rank correlation coefficients \citep{spearman1904proof} of the proxy reward models with the gold reference reward models are present in Table~\ref{tab:pythia-corr}.
The hyperparameters used in the reward model training are described in Table~\ref{tab:rm-hypers}.

The performance of BoN sampling improves with larger samples up to some point and it then decreases with more samples from that point. 

\begin{figure}[h]
    \centering
    \includegraphics[width=0.95\columnwidth]{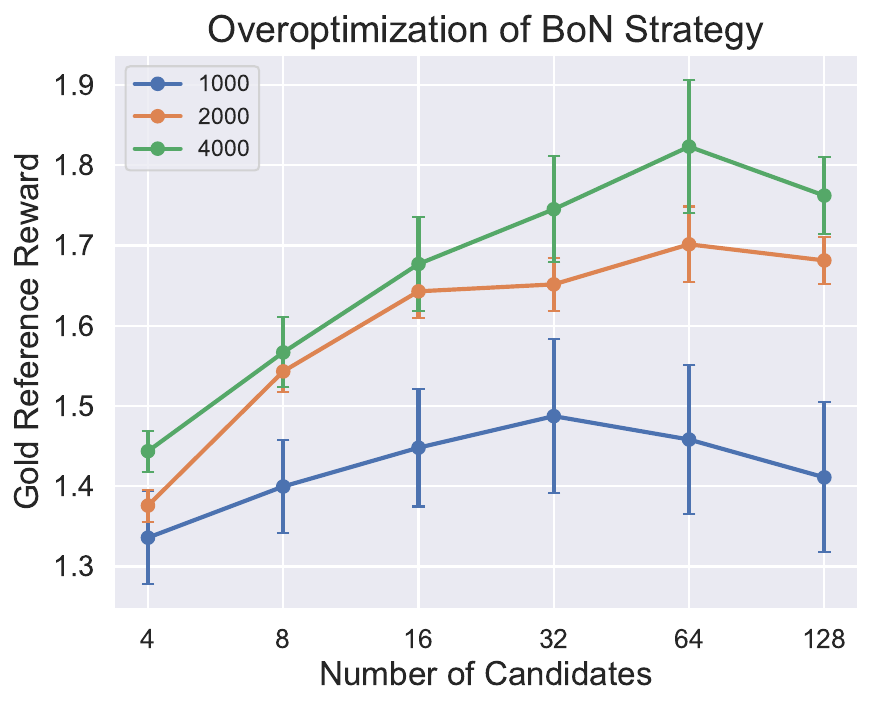}
    \caption{Performance of BoN sampling using proxy reward models. The lines show the mean and the bars show the standard deviation of three runs.}
    \label{fig:overoptimization}
\end{figure}

\begin{table}[h]
    \centering
    \caption{Spearman's rank correlation coefficients of the proxy reward models with the gold reference reward model (Pythia 2.8B). The proxy reward models are trained with 1000, 2000, and 4000 instances of the training split.}
    \label{tab:pythia-corr}
    \begin{tabular}{cc}
    \toprule
    \#Training & $\rho$ \\
    \midrule
    1000 & 0.189 $\pm$ 0.264 \\
    2000 & 0.327 $\pm$ 0.215 \\
    4000 & 0.358 $\pm$ 0.224 \\
    \bottomrule
    \end{tabular}
\end{table}

\section{Evaluation of MBR Objective as a Proximity Regularizer}
\label{apd:pca}

Table~\ref{tab:helpful-pca} shows the correlation of the distance to the center of the distribution of the sentence embeddings (i.e., $L_1$-norm of the component vector) with the value of the MBR objective in the hh-rlhf datasets. See Section~\ref{sec:regularizer} for the experimental setups.
The distance from the center of the distribution has a strong negative correlation with the MBR objective. 
On the other hand, the correlation with the log probability of the output is weak which shows that the log probability and the KL-divergence using that is not a reliable source to quantify the proximity of the output with respect to the embedding space (Table~\ref{tab:logprob-pca}).

Figure~\ref{fig:pca-hh} shows the average normalized MBR objective values mapped to the first and second principal components. The result shows that the outputs that lie in the center of the distribution tend to have higher MBR scores, which indicates that the MBR score serves as a regularizer to keep the output faithful to the reference policy.

As an ablation study, we evaluate the correlation for a machine translation task using a machine translation model and a utility function for machine translation.
We use WMT'21 De-En \citep{akhbardeh-etal-2021-findings} as a dataset and \texttt{wmt21-dense-24-wide-x-en} \citep{tran-etal-2021-facebook} as the translation model. Both the embedding function and the utility function use \texttt{wmt20-comet-da} \citep{rei-etal-2020-unbabels}. Note that \texttt{wmt20-comet-da} is not designed to be a symmetric function with respect to $y$ and $y'$ as the model measures the utility over $y$ and $y'$ and also the translation quality directly using the source text $x$.
Figure~\ref{fig:pca-wmt} shows the mapping of the values of the MBR objective on WMT'21 De-En. Overall, we observe qualitatively the same result as in the AlpacaFarm and hh-rlhf datasets. The result shows that the MBR objective serving as a regularizer is observed in a machine translation task in addition to the instruction-following tasks.

\begin{table}[thb]
    \centering
    \caption{Correlation of the distance from the center of the distribution in the component space with the MBR objective.}
    \label{tab:helpful-pca}
    \adjustbox{max width=\columnwidth}{
    \begin{tabular}{ccc}
    \toprule
    Dim & PCA & ICA \\
    \midrule\midrule
    \multicolumn{3}{c}{hh-rlhf Helpfulness}\\\midrule
    2 & -0.5702 $\pm$ 0.2013 & -0.5696 $\pm$ 0.1906 \\
    5 & -0.7478 $\pm$ 0.1339 & -0.6931 $\pm$ 0.1299 \\
    10 & -0.8407 $\pm$ 0.1136 & -0.6792 $\pm$ 0.1375 \\
    \midrule\midrule
    \multicolumn{3}{c}{hh-rlhf Harmlessness}\\\midrule    
    2 & -0.6050 $\pm$ 0.1770 & -0.5917 $\pm$ 0.1727 \\
    5 & -0.7536 $\pm$ 0.1305 & -0.6920 $\pm$ 0.1298 \\
    10 & -0.8550 $\pm$ 0.1066 & -0.6909 $\pm$ 0.1311 \\
    \midrule\midrule
    \multicolumn{3}{c}{WMT'21 De-En}\\\midrule    
    2 & -0.3917 $\pm$ 0.2108 & -0.3820 $\pm$ 0.2055 \\
    5 & -0.5676 $\pm$ 0.1540 & -0.5287 $\pm$ 0.1458 \\
    10 & -0.6705 $\pm$ 0.1306 & -0.5612 $\pm$ 0.1360 \\
    \bottomrule
    \end{tabular}
    }
\end{table}

\begin{table}[th]
    \centering
    \caption{Correlation of the distance from the center of the distribution in the component space with the log probability on AplacaFarm.}
    \label{tab:logprob-pca}
    \adjustbox{max width=\columnwidth}{
    \begin{tabular}{ccc}
    \toprule
    Dim & PCA & ICA \\
    \midrule\midrule
    2 & 0.0826 $\pm$ 0.2340 & 0.0806 $\pm$ 0.2373 \\
    5 & 0.0954 $\pm$ 0.2315 & 0.0905 $\pm$ 0.2075 \\
    10 & 0.0784 $\pm$ 0.2357 & 0.0425 $\pm$ 0.2061 \\
    \bottomrule
    \end{tabular}
    }
\end{table}

\begin{figure}[tbh]
    \centering
    \subfloat[Helpfulness]{
    \includegraphics[width=0.725\columnwidth]{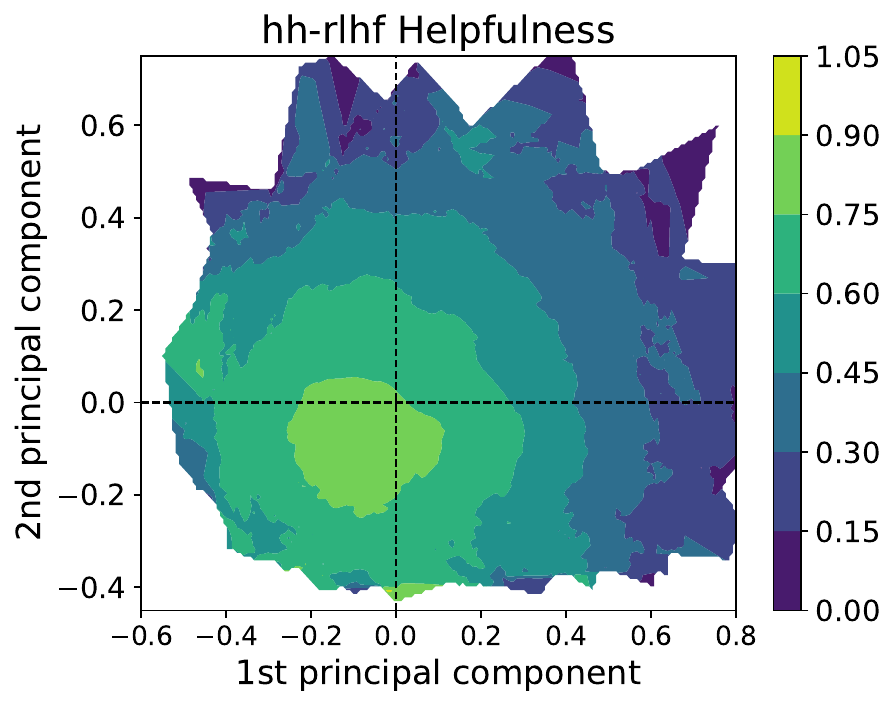}}\\
    \subfloat[Harmlessness]{
    \includegraphics[width=0.725\columnwidth]{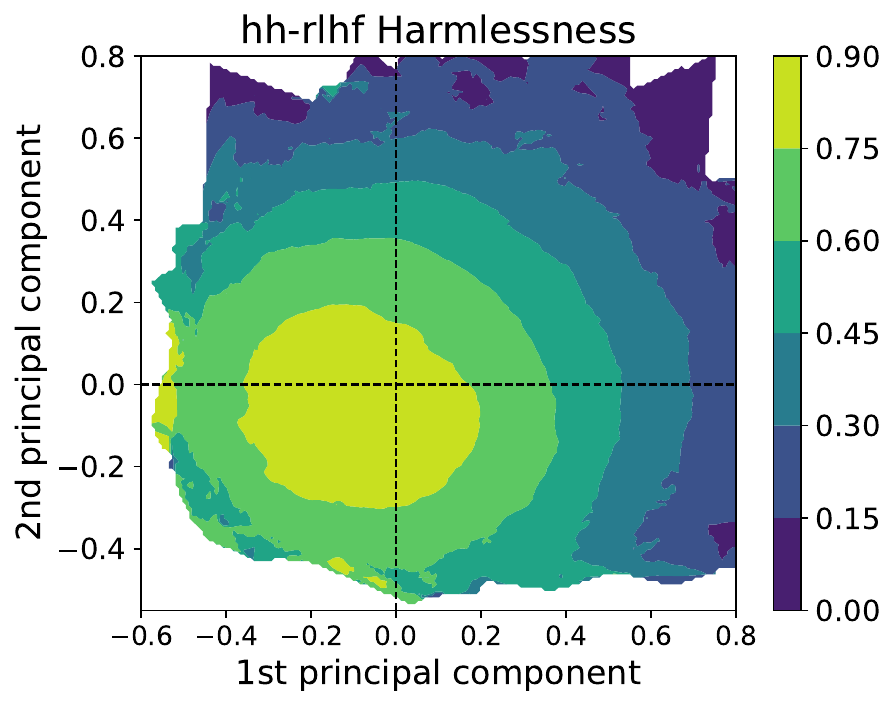}}\\
    \subfloat[WMT'21 De-En]{
    \includegraphics[width=0.725\columnwidth]{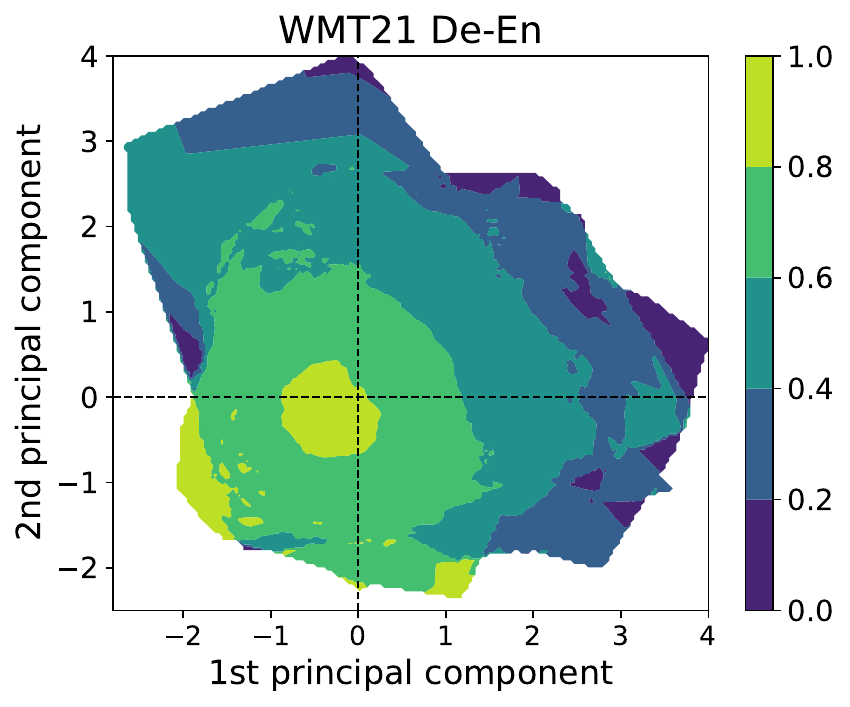}\label{fig:pca-wmt}}
    \caption{Visualization of the mean values of the MBR objective in the space of the first and second principal components.}
    \label{fig:pca-hh}
\end{figure}

\clearpage
\section{Derivation of Proposition~\ref{prop:wd}}
\label{apd:wd}
We show the derivation of Proposition.~\ref{prop:wd}. From the definition of Wasserstein distance with $p=1$ \citep{peyre2020computational,villani2021topics}, we get the following:
\begin{align}
    WD(\pi_y,& \hat{\pi}_\mathrm{ref}(\cdot | x)) = \nonumber \\
    &\min_{\{\mu_{i, j}\}_{i, j} \in \mathcal{J}} \sum_{i = 1}^{|\hypos|} \sum_{j = 1}^{|\hypos|} \mu_{i, j} C(y_i, y_j),
\label{eq:p1}
\end{align}
where $\mathcal{J}$ is a set of all couplings $\{\mu_{i, j}\}_{i, j}$ \citep{villani2021topics}:
\begin{align}
    \mathcal{J} = \bigl\{ \{\mu_{i, j}\}_{i, j}& : \nonumber\\
    &\sum_{i = 1}^{|\hypos|} \mu_{i, j} = \hat{\pi}_\mathrm{ref}(y_j | x), \nonumber\\
    &\sum_{j = 1}^{|\hypos|} \mu_{i, j} = \pi_y(y_i), \nonumber\\
    &\mu_{i, j} \geq 0 \bigr\}.
\end{align}
Because $\pi_y(y_i) = 0$ for all $y_i \neq y$ and $\mu_{i, j} \geq 0$, we get $\mu_{i, j} = 0$ for all $y_i \neq y$. Thus,
\begin{align}
    \eqref{eq:p1} = \min_{\mathcal{J}} \sum_{j = 1}^{|\hypos|} \mu_{y, j} C(y, y_j)
\label{eq:p2}
\end{align}
Using $\mu_{i, j} = 0$ for all $i \neq y$ and $\sum_{i = 1}^{|\hypos|} \mu_{i, j} = \hat{\pi}_\mathrm{ref}(y_j | x)$, we get $\mu_{y, j} = \hat{\pi}_\mathrm{ref}(y_j | x)$. Thus,
\begin{align}
    \eqref{eq:p2} &= \min_{\mathcal{J}} \sum_{j = 1}^{|\hypos|} \hat{\pi}_\mathrm{ref}(y_j | x) C(y, y_j) \nonumber \\
    &= \sum_{j = 1}^{|\hypos|} \hat{\pi}_\mathrm{ref}(y_j | x) C(y, y_j).
\label{eq:p3}
\end{align}
Because $\hat{\pi}_\mathrm{ref}(y_j | x)$ is an empirical distribution from the set of samples $\hypos$, $\hat{\pi}_\mathrm{ref}(y_j \mid x) = \frac{1}{N} \sum_{y_i \in \hypos} \mathbb{I}[y_j = y_i]$. Thus, 
\begin{align}
    \eqref{eq:p3} &= \sum_{y' \in \hypos}\frac{1}{N} C(y, y') \\
    &= -\sum_{y' \in \hypos}\frac{1}{N} U(y, y').
\end{align}
Thus, we get Proposition~\ref{prop:wd}.

\section{Evaluation of KL-Regularized BoN}
\label{apd:klrbon}

A naive implementation of proximity regularization for BoN sampling is to introduce KL-regularization. 
BoN with KL-regularization (\klrbon{}) can be derived from Eq.~\eqref{eq:ppo-loss} as follows:
\begin{align}
    y_{\mathrm{RBoN_{KL}}}&(x) = \nonumber \\
    &\argmax_{y \in \hypos} R(x, y) - \beta \mathbb{D}_\mathrm{KL}[\pi_y || \pi_\mathrm{ref}(\cdot | x)],
\label{eq:klbon}
\end{align}
where $\pi_y$ represents a policy of choosing $y$ with a probability of $1$, which is the policy it will end up with if it chooses $y$ as the output. Thus, $\mathbb{D}_\mathrm{KL}[\pi_y || \pi_\mathrm{ref}(\cdot | x)]$ represents the KL divergence between the resulting policy and the reference policy. 
Intuitively, \klrbon{} optimizes the same objective as Eq.~\eqref{eq:ppo-loss} but with modifications to make it available at decoding time.
Eq.~\eqref{eq:klbon} is derived from Eq.~\eqref{eq:ppo-loss} by computing the optimal response for a given $x$ instead of computing the optimal policy.

\begin{figure}[tb]
    \centering
    \subfloat[Proxy: SHP-Large]{\includegraphics[width=0.72\columnwidth]{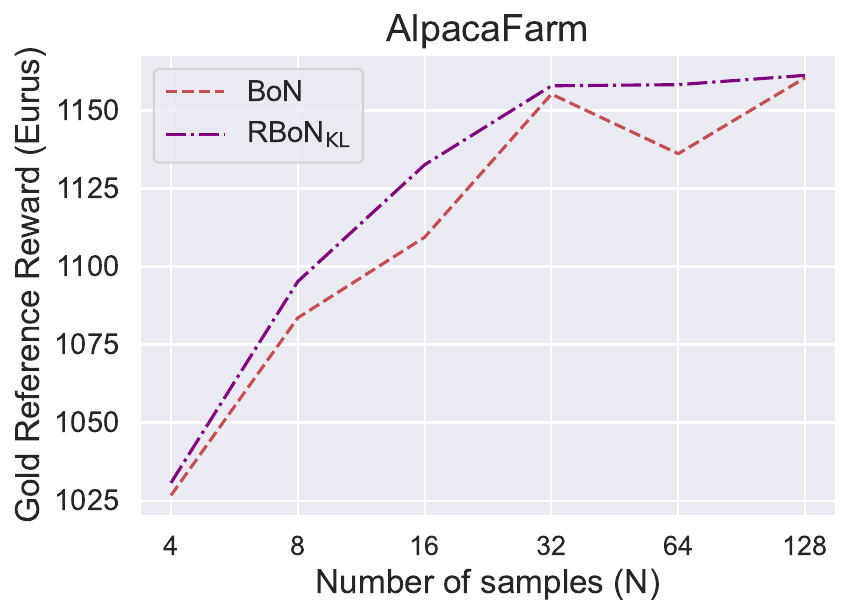}}\\
    \subfloat[Proxy: SHP-XL]{\includegraphics[width=0.72\columnwidth]{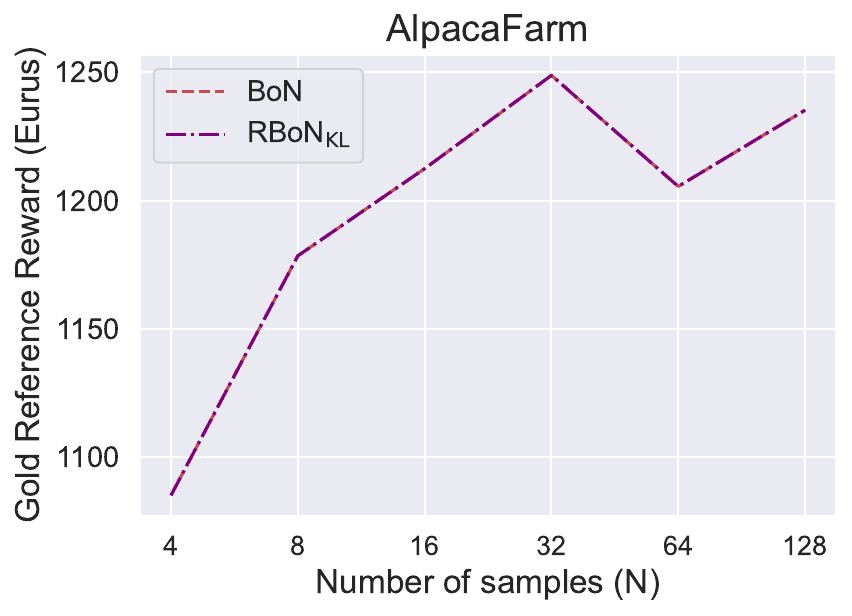}} \\
    \subfloat[Proxy: OASST]{\includegraphics[width=0.72\columnwidth]{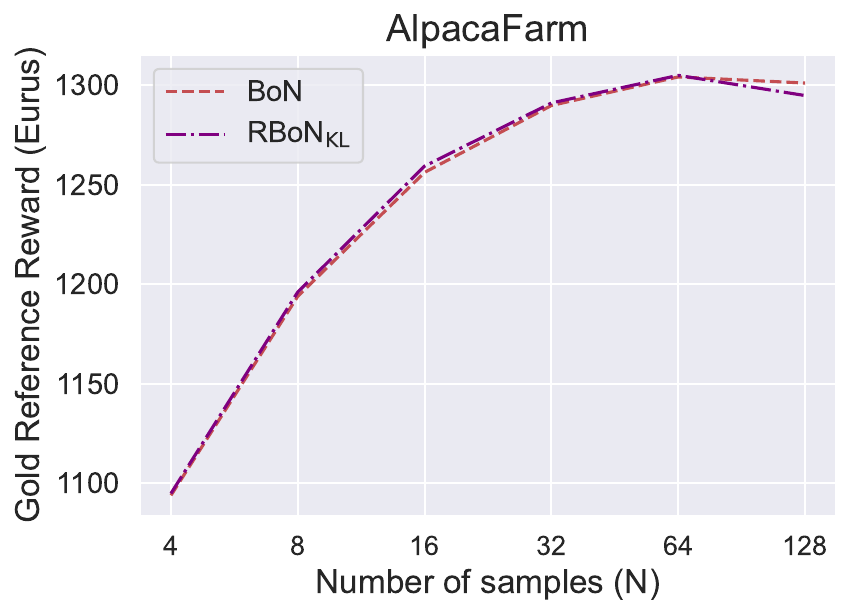}} 
    \caption{Evaluation of the \klrbon{} using Mistral on the AlpacaFarm dataset.}
    \label{fig:mistral-alpaca-kl}
\end{figure}

The tradeoff between the reward and the proximity to the reference model is controlled by the hyperparameter $\beta$.
With a small $\beta$, the output is more aligned with the proxy reward model. With $\beta=0$, the vanilla BoN is restored.
With larger $\beta$, the output is closer to the behavior of the reference model $\pi_\mathrm{ref}$, where $\beta=+\infty$ selects the response with the highest model probability, recovering the maximum a posteriori (MAP) decoding \citep{stahlberg-byrne-2019-nmt,eikema-aziz-2020-map,Holtzman2020The}.

Figure~\ref{fig:mistral-alpaca-kl} shows the performance of \klrbon{}. Overall, its improvement over BoN is marginal.

\section{Evaluation using Reward Model Trained on AlpacaFarm}
\label{apd:trainedrm}

Figure~\ref{fig:pythia-eval} shows the performance of \wdrbon{} compared to BoN using a reward model trained on the AlpacaFarm training set. The reward model is the gold reference reward model based on Pythia-2.8B used in Appendix~\ref{apd:overoptimization}. The improvement of \wdrbon{} over BoN is large when the number of samples are large and also when the proxy reward model is less algined with the gold reference reward model. On the other hand, when the proxy reward model is trained using noiseless (the same preference annotations as the gold reference model) and large enough dataset ($|\mathcal{D}| =4000$), the performance of \wdrbon{} is on par with BoN. 

\begin{figure}[th]
    \centering
    \includegraphics[width=0.9\columnwidth]{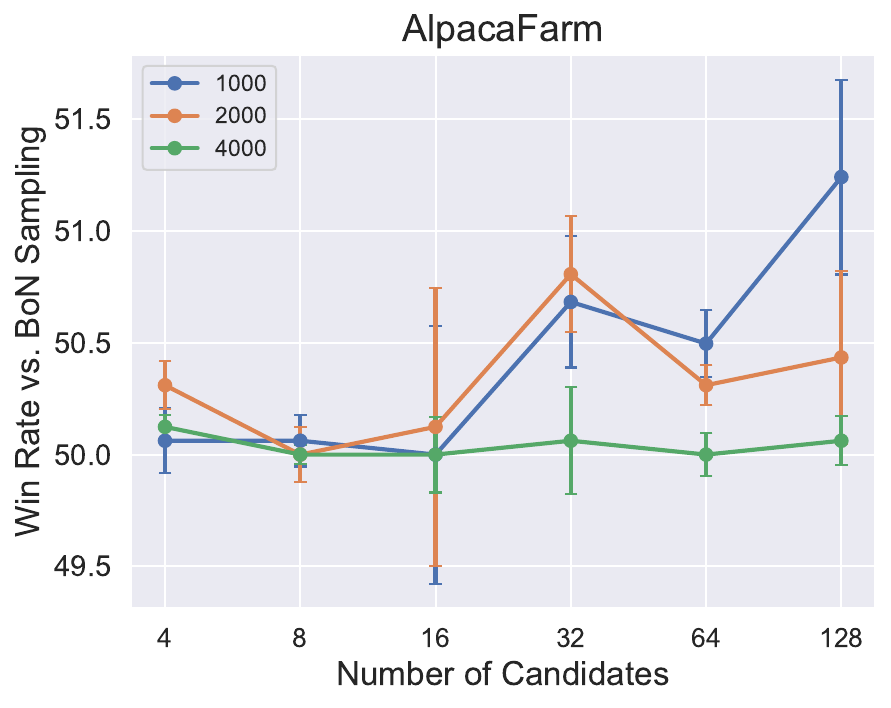}
    \caption{The average win rate of the MBR-BoN against BoN using a reward model trained on the training set of AlpacaFarm.}
    \label{fig:pythia-eval}
\end{figure}

\begin{figure}[thb]
    \centering
    \subfloat[SHP-Large]{\includegraphics[width=0.71\columnwidth]{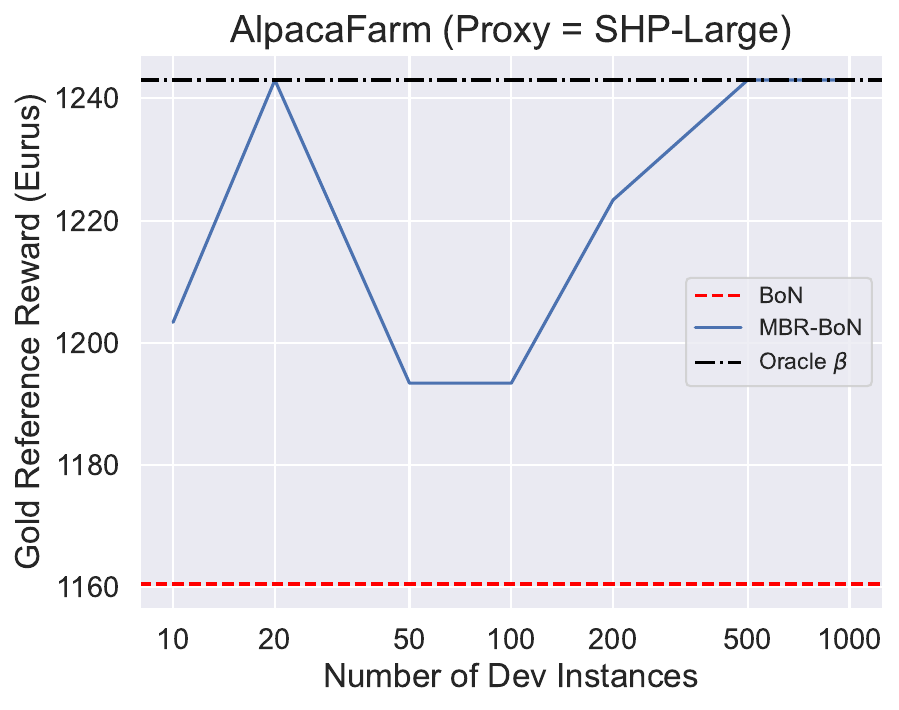}}\\ \subfloat[SHP-XL]{\includegraphics[width=0.71\columnwidth]{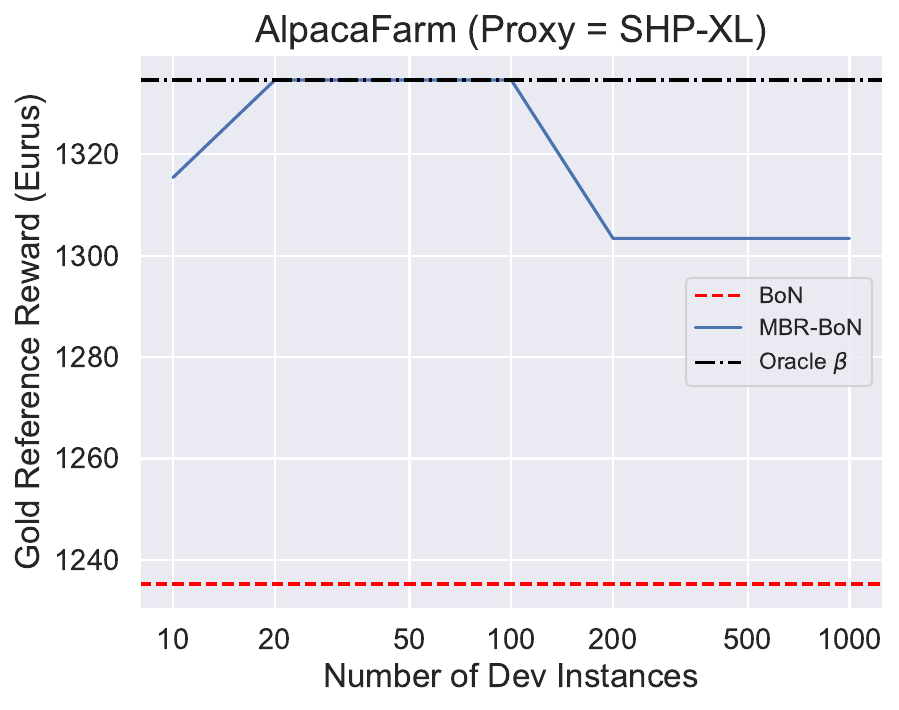}}\\
    \subfloat[OASST]{\includegraphics[width=0.71\columnwidth]{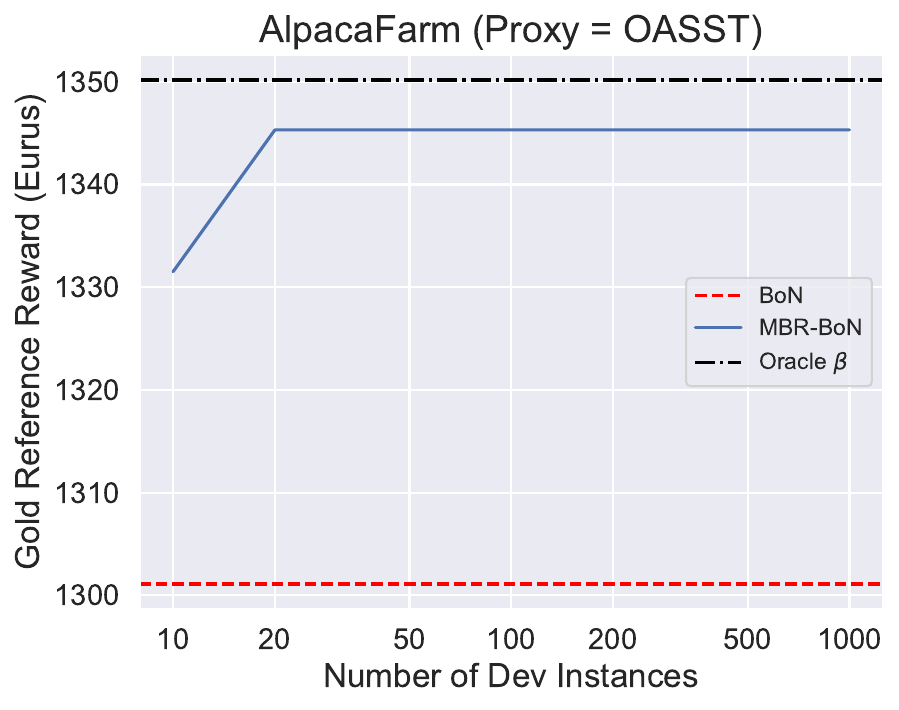}}
    \caption{Evaluation of MBR-BoN with varying sizes of development set to tune the optimal $\beta$.}
    \label{fig:devset}
\end{figure}

\section{Analysis on the Size of the Development Set for Tuning Beta}
\label{apd:devset}
We run a posthoc analysis to evaluate the effect of the size of the development set to tune the hyperparameter $\beta$ for MBR-BoN.
Figure~\ref{fig:devset} shows the performance of MBR-BoN with varying sizes of development set to compute the $\beta$, from 10 to 1000.
We observe that the score is relatively consistent and \wdrbon{} outperforms BoN even with 10 examples for fine-tuning $\beta$.

\begin{figure}[tbh]
    \centering
    \includegraphics[width=0.9\columnwidth]{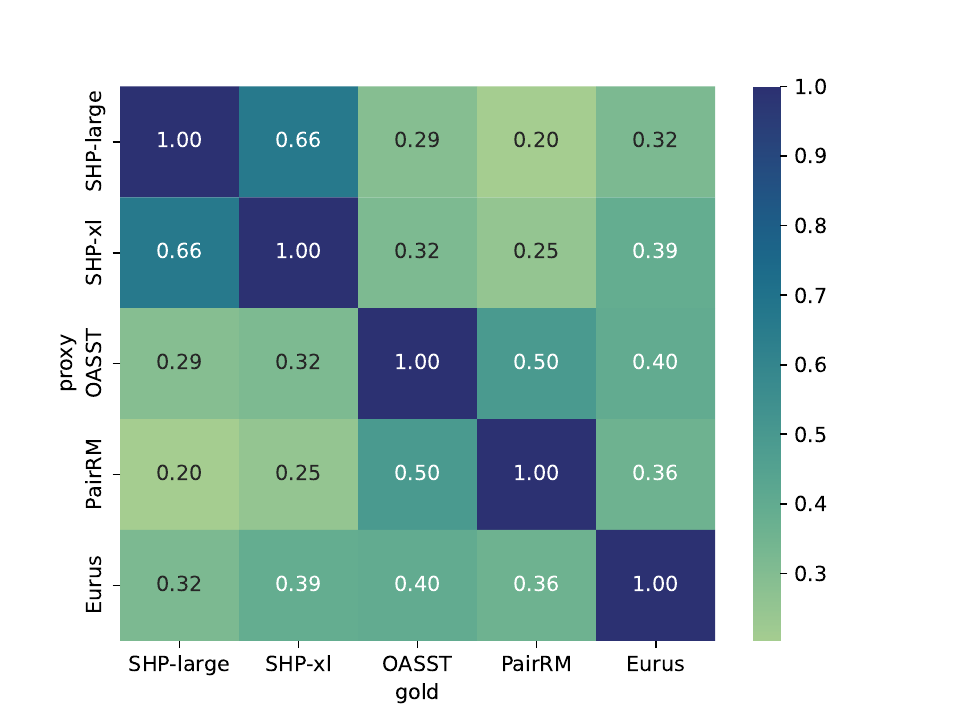}
    \caption{Average Spearman's rank correlation coefficient of the reward models in the evaluation split of the AlpacaFarm dataset for the responses generated by Mistral. 128 responses are used to compute Spearman's rank correlation for each instruction, averaged over the 805 instructions.}
    \label{fig:corr}
\end{figure}
\section{Effect of the Regularization Strength}
\label{apx:beta}

\paragraph{Correlation Coefficient.}
To understand the effect of the regularization strength on the performance of RBoN under different pairs of proxy and gold reward models, we evaluate RBoN using SHP-Large, SHP-XL, OASST, and PairRM \cite{llm-blender-2023} as gold reward models.
Figure \ref{fig:corr} reports the average Spearman's rank correlation coefficient $\rho$ of a pair of reward models \citep{spearman1904proof}. Note that SHP-Large and SHP-XL reward models are highly correlated as they are trained on the same training procedure.

\paragraph{Tradeoff between Proxy Reward and Proximity Scores.}
Figure~\ref{fig:tradeoff} shows the tradeoff of the proxy reward score and the MBR objective score with different values of $\beta$ on \wdrbon{}. 
The result shows that the hyperparameter $\beta$ effectively controls the weights over the proxy reward model and proximity to the reference policy.

\begin{figure*}
    \centering
    \subfloat[SHP-Large]{
    \includegraphics[width=0.32\textwidth]{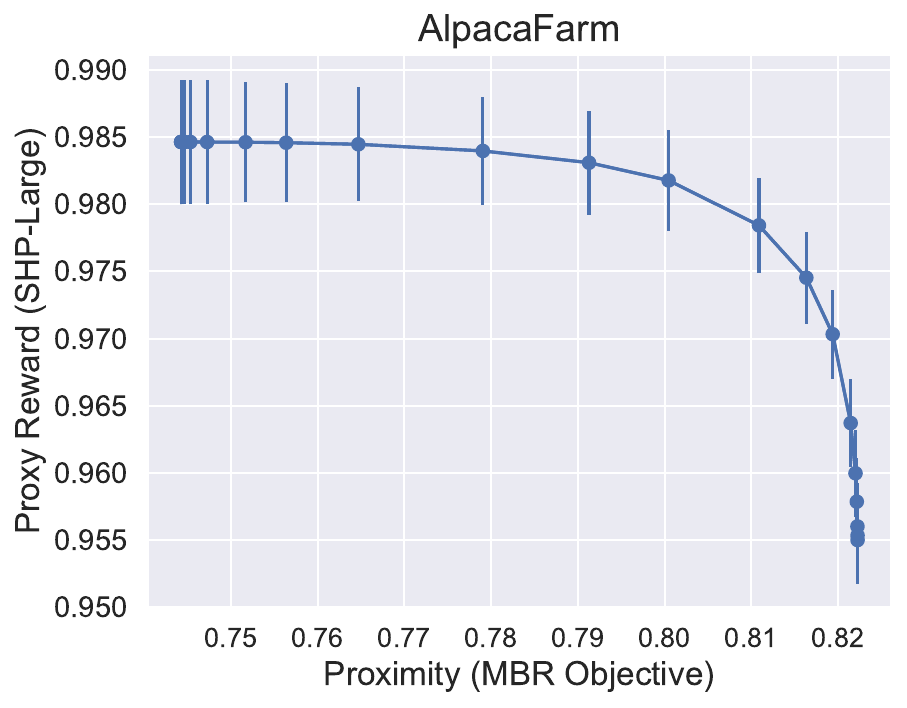}
    }
    \subfloat[SHP-XL]{
    \includegraphics[width=0.32\textwidth]{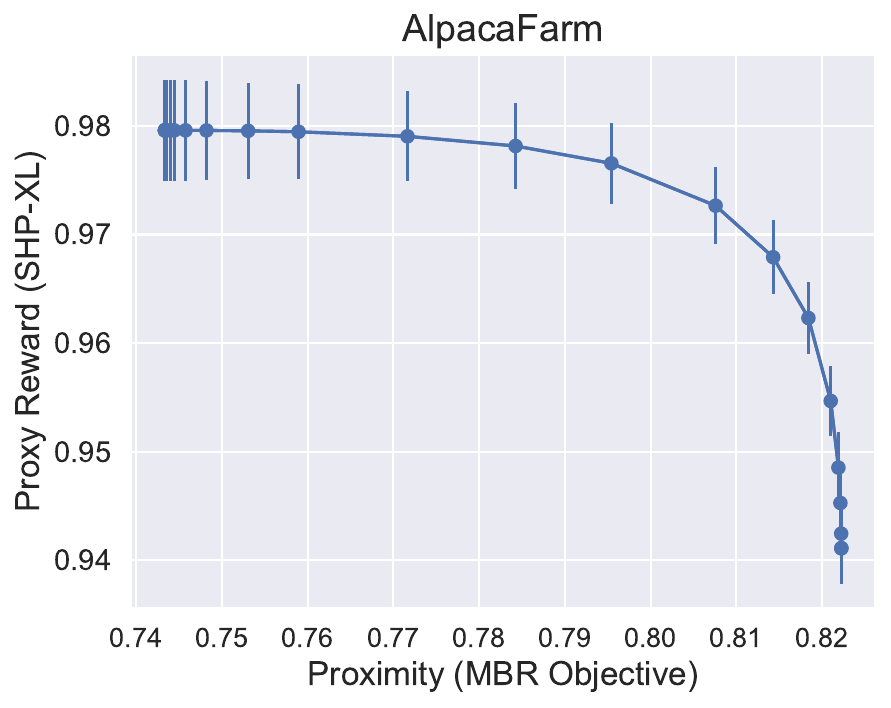}
    }
    \subfloat[OASST]{
    \includegraphics[width=0.32\textwidth]{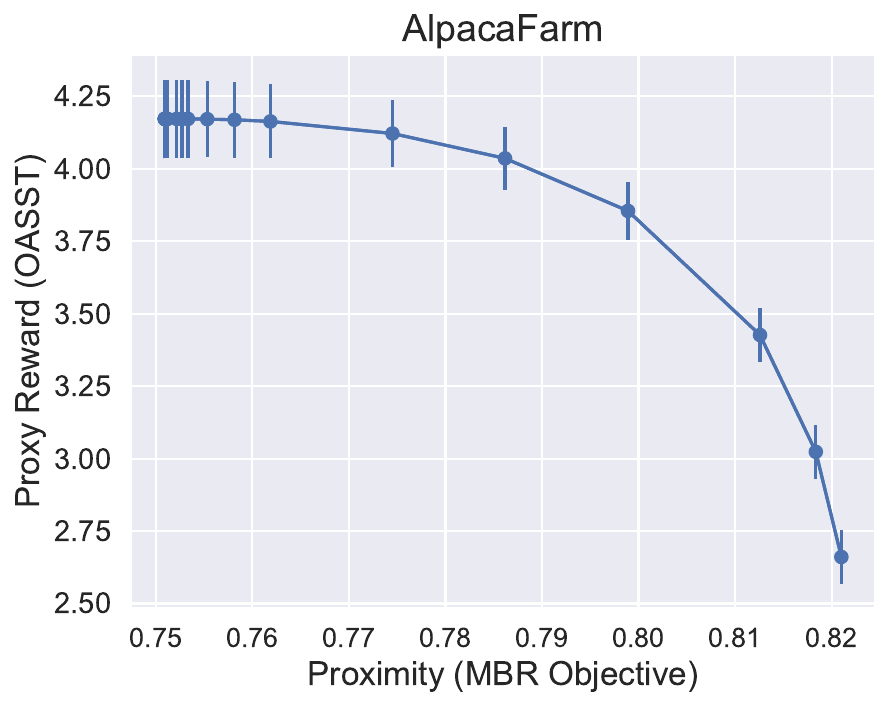}
    }
    \caption{The tradeoff of the proxy reward score and proximity (MBR objective) with \wdrbon{} using different $\beta$ strengths on AlpacaFarm. The responses are generated by Mistral. The number of samples $N$ is 128. The line shows the mean and the error bar shows the standard error of the estimation of the mean value.}    
    \label{fig:tradeoff}
\end{figure*}

\paragraph{Evaluation of \wdrbon{} using Various Reference Reward Models.}
We perform the generation (BoN and \wdrbon{}) using one of the reward models as the proxy reward model and evaluate the selected responses using the remaining reward models as the gold reference rewards.
We do not use PairRM as a proxy reward model because it is a pairwise reward model that estimates the preference for a pair of responses rather than computing an absolute preference for a response. The use of a pairwise reward model as a proxy reward model for RBoN is future work.

Figure~\ref{fig:mistral} shows the performance of BoN and \wdrbon{} with varying $\beta$ with $N=128$ using Mistral on the AlpacaFarm dataset. 
\wdrbon{} outperforms BoN in all settings except when the proxy reward model is highly correlated with the gold reward model (e.g., SHP-Large and SHP-XL).

The experiment shows that the optimal $\beta$ depends on various factors, but the strength of the correlation between the proxy reward model and the gold reference reward seems to be the key factor.
For example, SHP-Large is strongly correlated with SHP-XL ($\rho = 0.66$), so the optimal $\beta$ is close to 0. In this case, \wdrbon{} has little to no advantage over BoN. On the other hand, SHP-Large is only weakly correlated with OASST and PairRM ($\rho = 0.29, 0.20$), where the optimal $\beta$ for SHP-Large $\rightarrow$ OASST and PairRM is large ($\beta = 0.1-1.0$).

\begin{figure*}
    \centering
    \subfloat[SHP-Large $\rightarrow$ SHP-XL]{
    \includegraphics[width=0.32\textwidth]{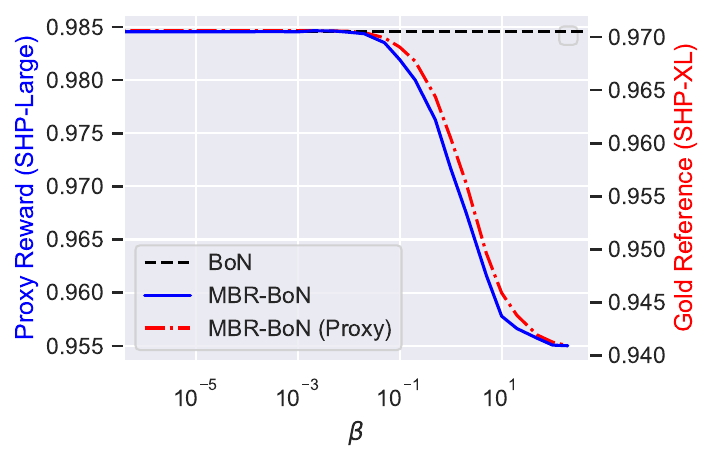}
    }
    \subfloat[SHP-Large $\rightarrow$ OASST]{
    \includegraphics[width=0.32\textwidth]{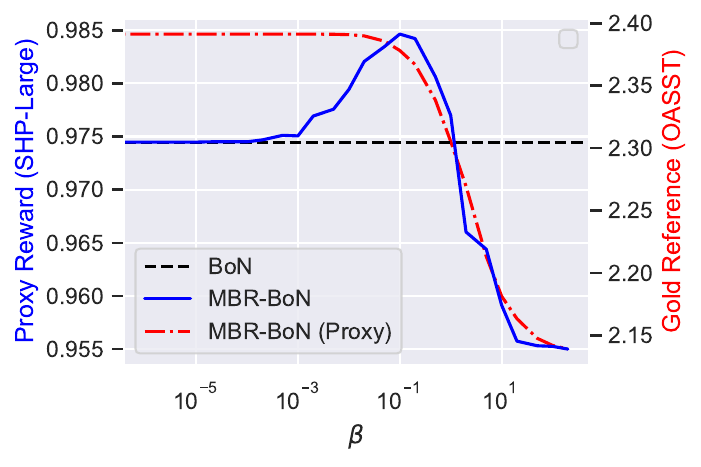}
    }
    \subfloat[SHP-Large $\rightarrow$ PairRM]{
    \includegraphics[width=0.32\textwidth]{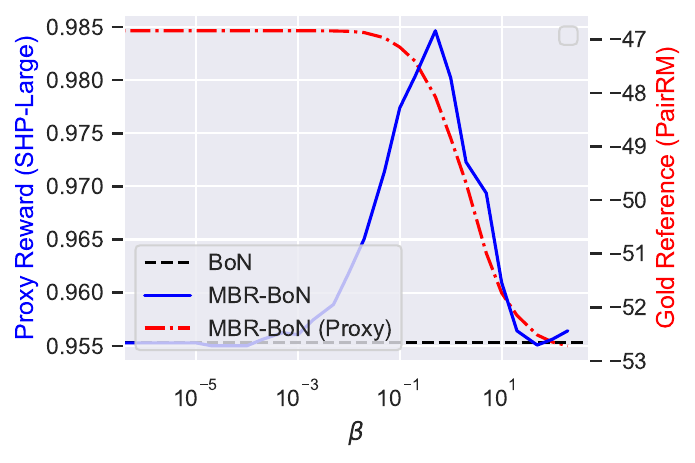}
    }
    
    \subfloat[SHP-XL $\rightarrow$ SHP-Large]{
    \includegraphics[width=0.32\textwidth]{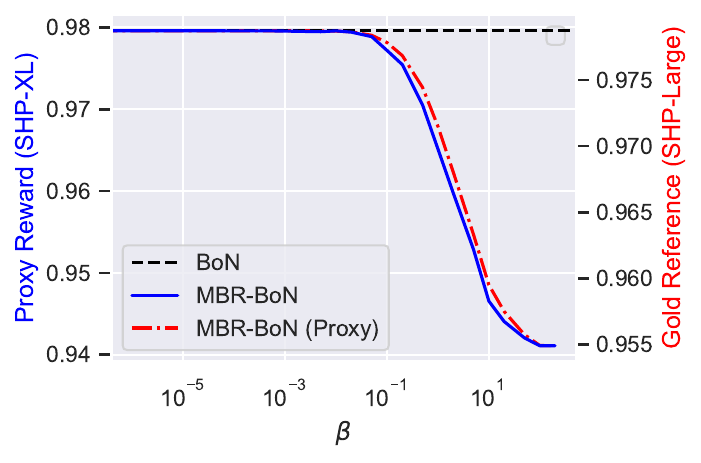}
    }
    \subfloat[SHP-XL $\rightarrow$ OASST]{
    \includegraphics[width=0.32\textwidth]{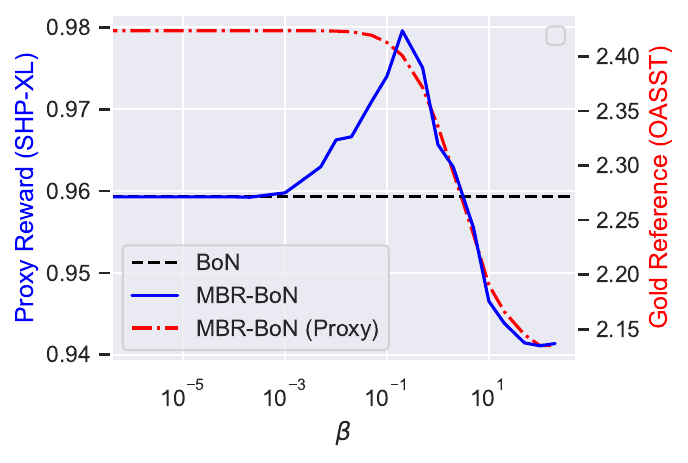}
    }
    \subfloat[SHP-XL $\rightarrow$ PairRM]{
    \includegraphics[width=0.32\textwidth]{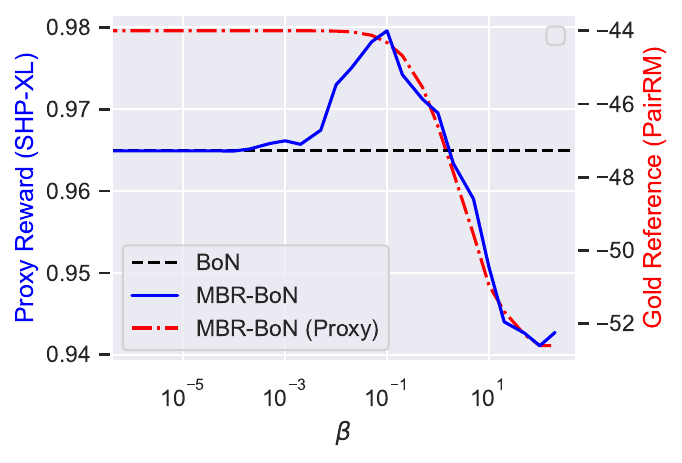}
    }
    
    \subfloat[OASST $\rightarrow$ SHP-Large]{
    \includegraphics[width=0.32\textwidth]{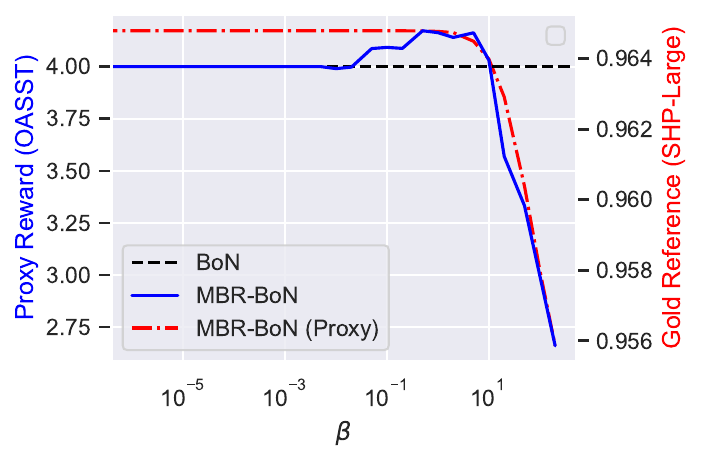}
    }
    \subfloat[OASST $\rightarrow$ SHP-XL]{
    \includegraphics[width=0.32\textwidth]{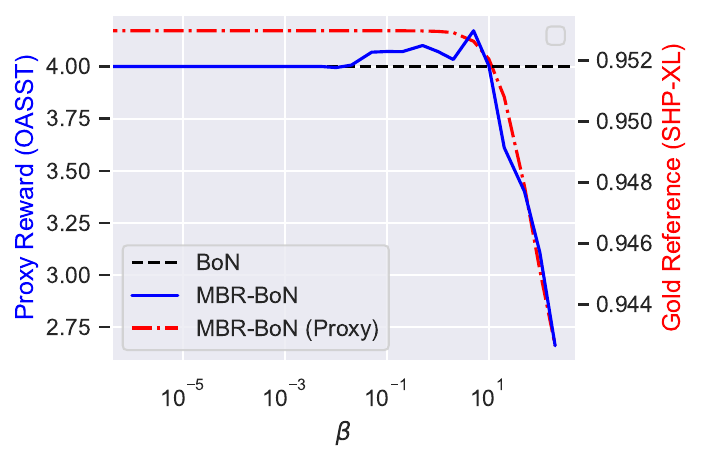}
    }
    \subfloat[OASST $\rightarrow$ PairRM]{
    \includegraphics[width=0.32\textwidth]{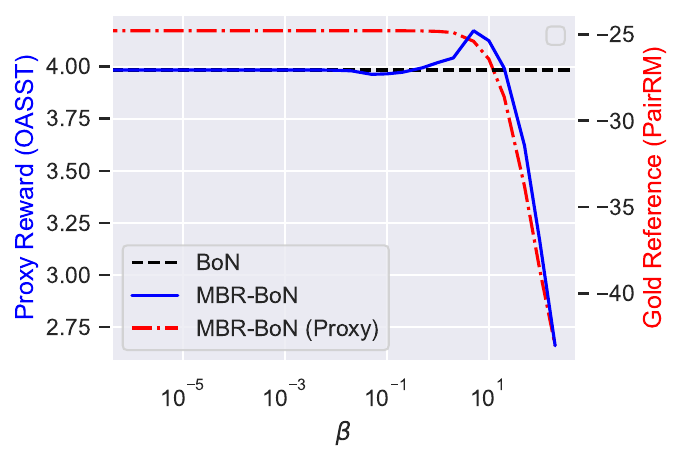}
    }
    \caption{The gold reward score and the proxy reward score of the \wdrbon{} with different regularization strengths and reward models. The captions of the subfigures show the proxy and the gold reward model (Proxy $\rightarrow$ Gold). The performance of BoN is shown in the horizontal lines. The responses are generated by Mistral. The number of samples $N$ is 128.}    
    \label{fig:mistral}
\end{figure*}

Figures~\ref{fig:mistral-n} and \ref{fig:dolly-n} show the performance of BoN ($\beta=0$), MBR decoding ($\beta=+\infty$), and \wdrbon{} with different number of samples $N$ using Mistral and Dolly on AlpacaFarm. We observe qualitatively similar results with smaller $N$ to the result of $N=128$ in Figure~\ref{fig:mistral-alpaca}.

\begin{figure*}
    \centering
    \subfloat[SHP-Large $\rightarrow$ SHP-XL]{
    \includegraphics[width=0.32\textwidth]{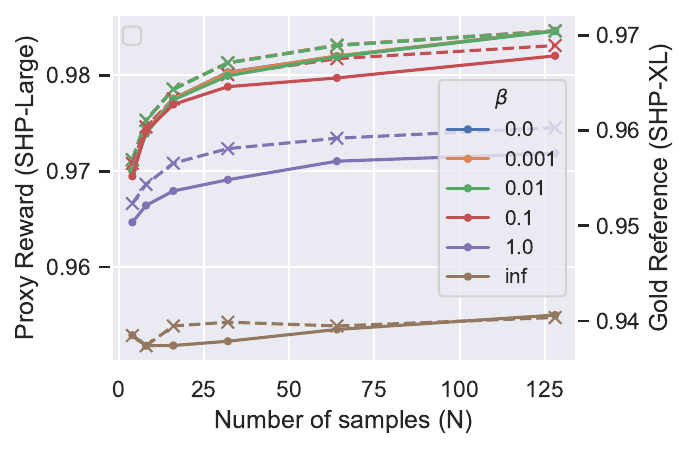}
    }
    \subfloat[SHP-Large $\rightarrow$ OASST]{
    \includegraphics[width=0.32\textwidth]{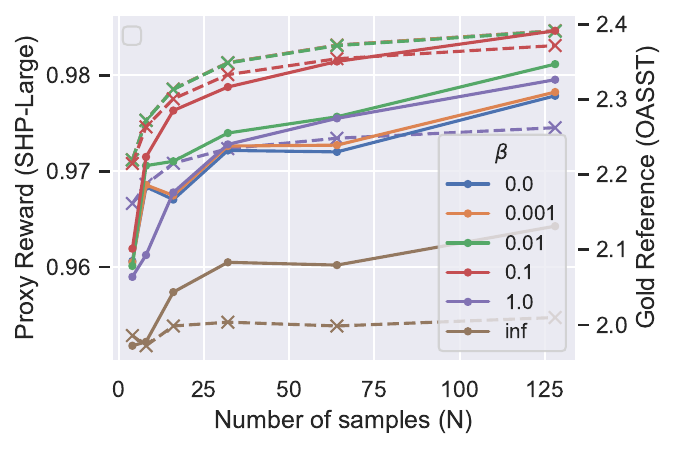}
    }
    \subfloat[SHP-Large $\rightarrow$ PairRM]{
    \includegraphics[width=0.32\textwidth]{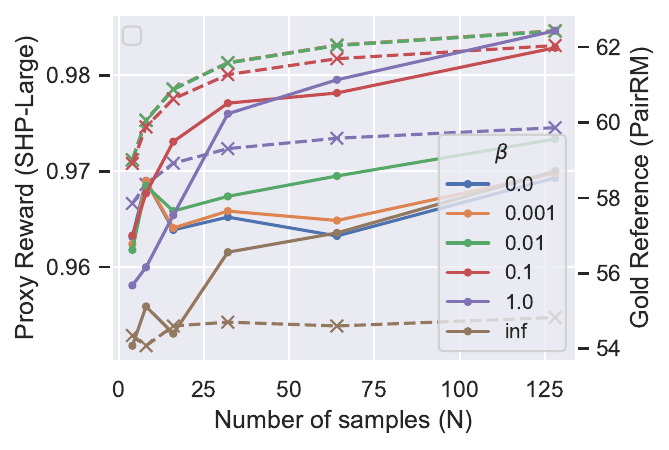}
    }
    
    \subfloat[SHP-XL $\rightarrow$ SHP-Large]{
    \includegraphics[width=0.32\textwidth]{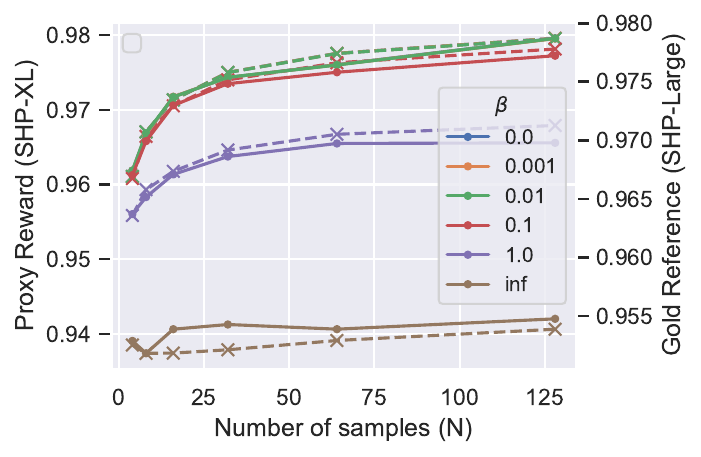}
    }
    \subfloat[SHP-XL $\rightarrow$ OASST]{
    \includegraphics[width=0.32\textwidth]{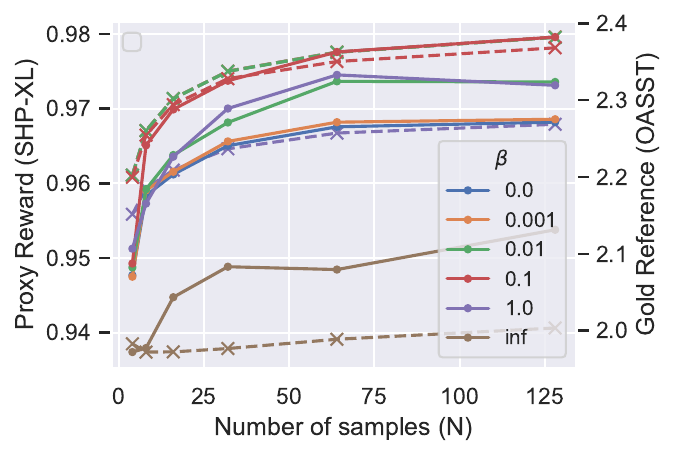}
    }
    \subfloat[SHP-XL $\rightarrow$ PairRM]{
    \includegraphics[width=0.32\textwidth]{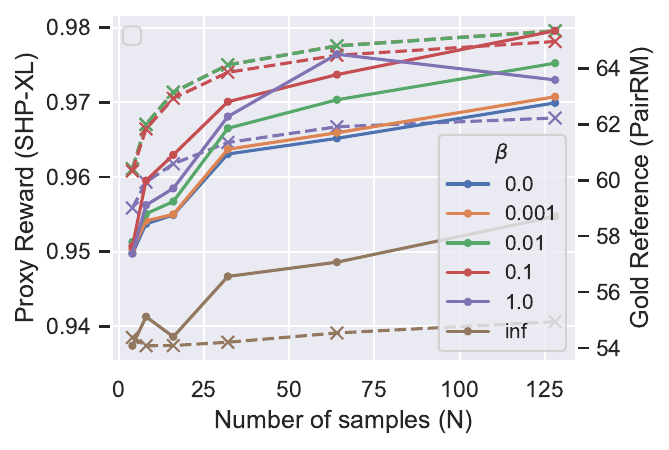}
    }
    
    \subfloat[OASST $\rightarrow$ SHP-Large]{
    \includegraphics[width=0.32\textwidth]{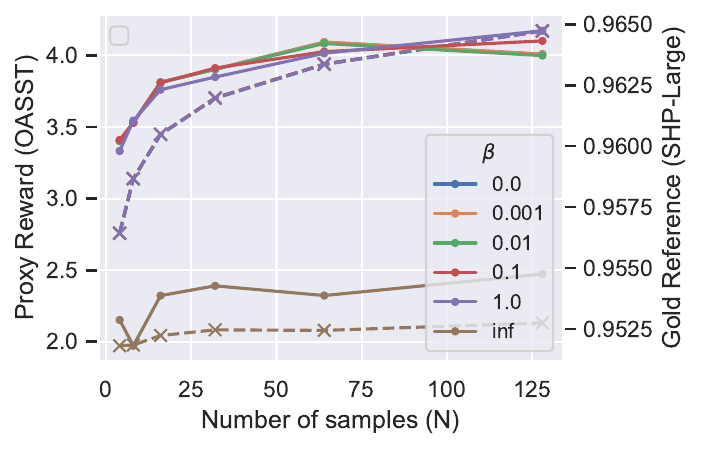}
    }
    \subfloat[OASST $\rightarrow$ SHP-XL]{
    \includegraphics[width=0.32\textwidth]{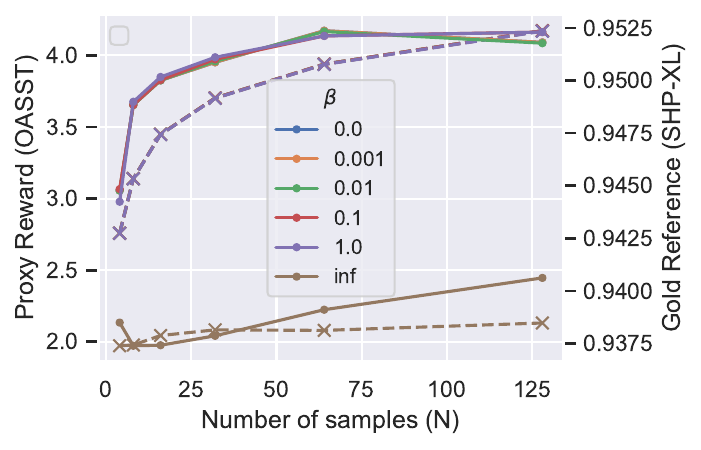}
    }
    \subfloat[OASST $\rightarrow$ PairRM]{
    \includegraphics[width=0.32\textwidth]{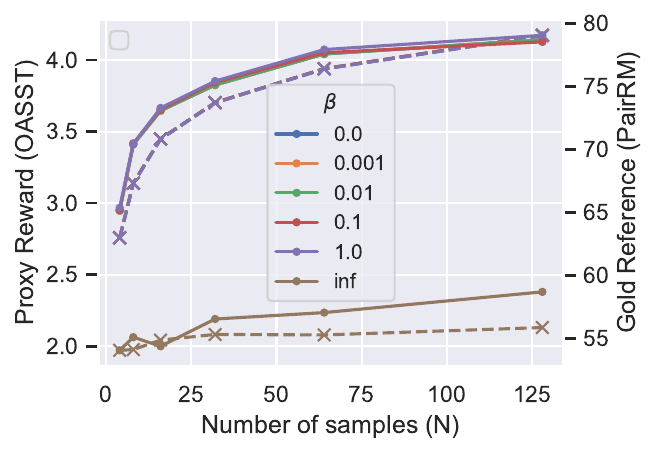}
    }
    \caption{Evaluation of \wdrbon{} using Mistral on AlpacaFarm. The gold reward score and the proxy reward score of the \wdrbon{} with different regularization strengths and reward models. The captions of the subfigures show the proxy and the gold reward model (Proxy $\rightarrow$ Gold). The reward scores of the reference reward (right axis) are shown in solid lines whereas the reward scores of the proxy reward (left axis) are shown in dashed lines. $\beta=$inf corresponds to the MBR decoding.} 
    \label{fig:mistral-n}
\end{figure*}

\begin{figure*}[t]
    \centering
    \subfloat[SHP-Large $\rightarrow$ SHP-XL]{
    \includegraphics[width=0.32\textwidth]{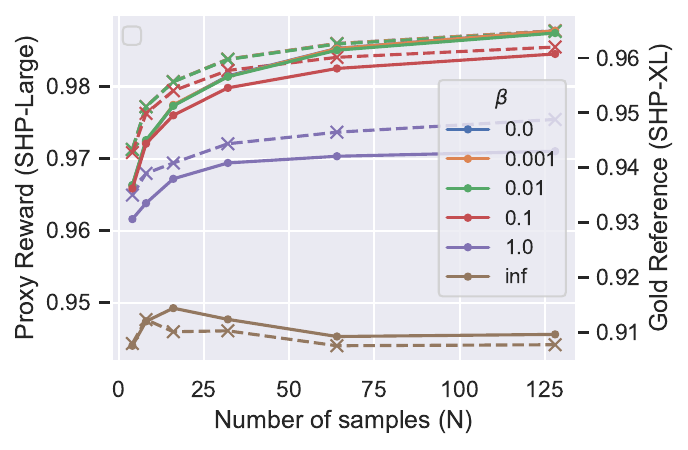}
    }
    \subfloat[SHP-Large $\rightarrow$ OASST]{
    \includegraphics[width=0.32\textwidth]{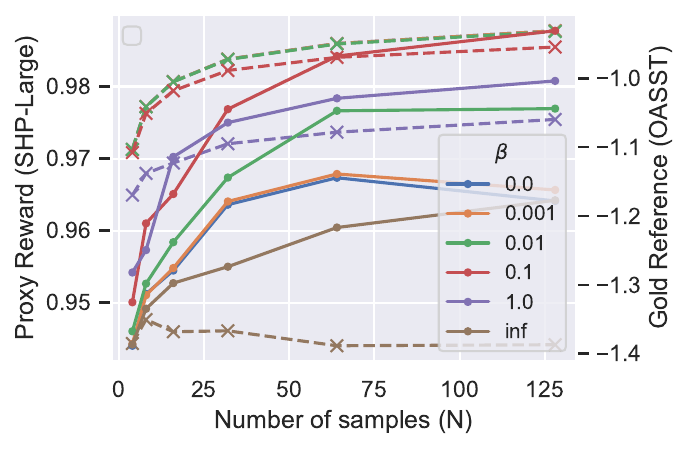}
    }
    \subfloat[SHP-Large $\rightarrow$ PairRM]{
    \includegraphics[width=0.32\textwidth]{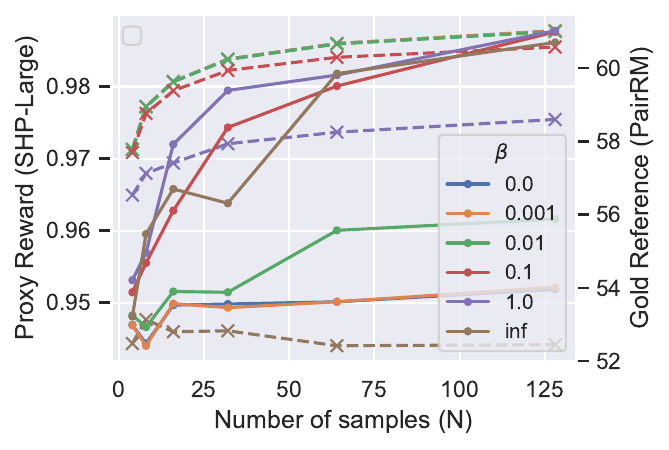}
    }
    
    \subfloat[SHP-XL $\rightarrow$ SHP-Large]{
    \includegraphics[width=0.32\textwidth]{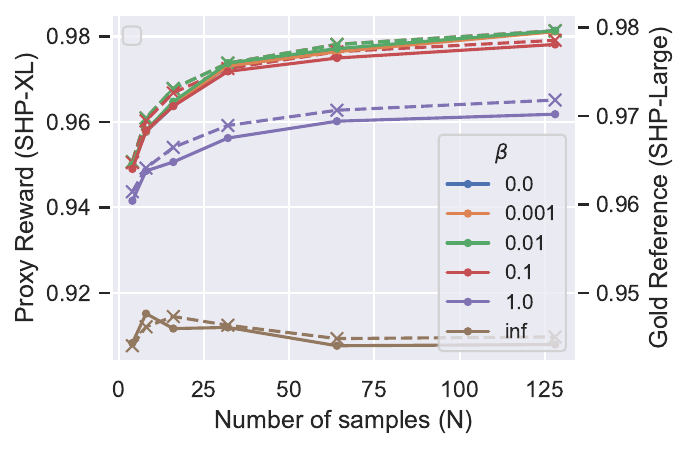}
    }
    \subfloat[SHP-XL $\rightarrow$ OASST]{
    \includegraphics[width=0.32\textwidth]{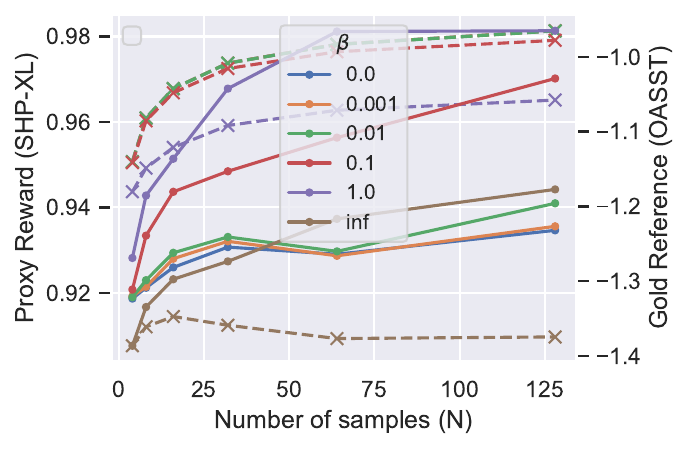}
    }
    \subfloat[SHP-XL $\rightarrow$ PairRM]{
    \includegraphics[width=0.32\textwidth]{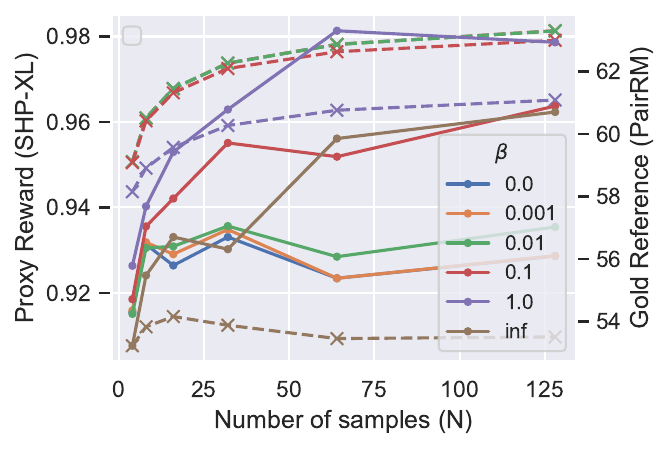}
    }
    
    \subfloat[OASST $\rightarrow$ SHP-Large]{
    \includegraphics[width=0.32\textwidth]{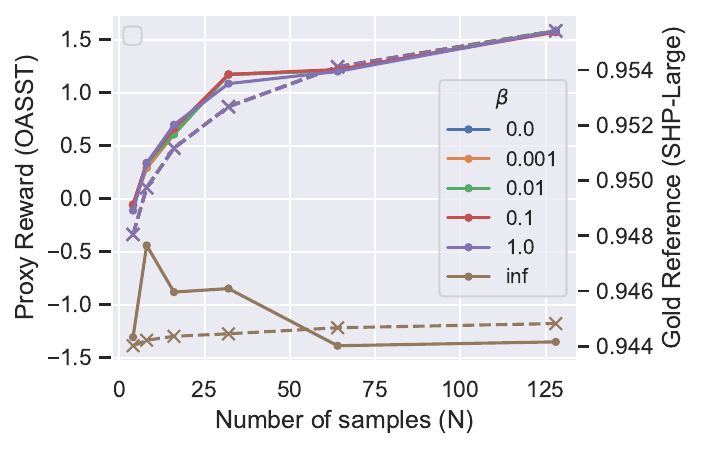}
    }
    \subfloat[OASST $\rightarrow$ SHP-XL]{
    \includegraphics[width=0.32\textwidth]{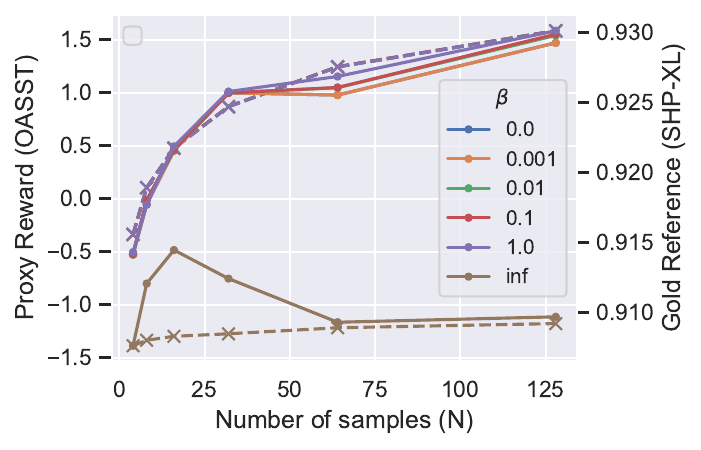}
    }
    \subfloat[OASST $\rightarrow$ PairRM]{
    \includegraphics[width=0.32\textwidth]{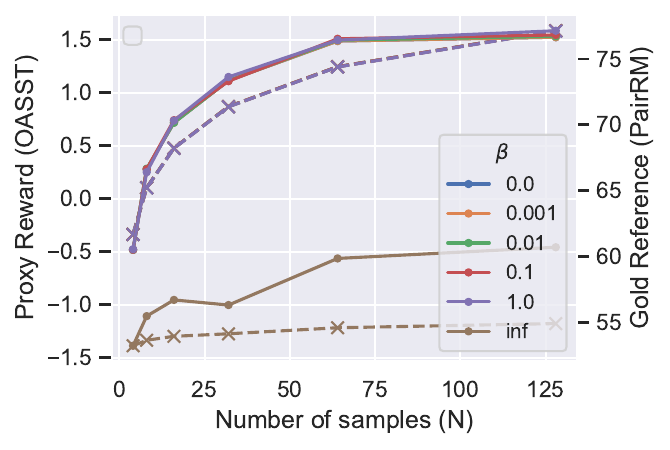}
    }
    \caption{Evaluation of \wdrbon{} using Dolly on AlpacaFarm. The gold reward score and the proxy reward score of the \wdrbon{} with different regularization strengths and reward models. The captions of the subfigures show the proxy and the gold reward model (Proxy $\rightarrow$ Gold). The reward scores of the reference reward (right axis) are shown in solid lines whereas the reward scores of the proxy reward (left axis) are shown in dashed lines. $\beta=$inf corresponds to the MBR decoding.} 
    \label{fig:dolly-n}
\end{figure*}

\clearpage
\section{GPT-4o Evaluation of the DPO}
\label{apx:gpt4}

Figure~\ref{fig:gpt4} shows the average score of the models trained by DPO in Section~\ref{sec:learning} using GPT-4o as a judge \citep{NEURIPS2023_91f18a12,openai2024gpt4}.
We evaluate using the first 300 entries of the test split of the datasets. 
We use the following prompt to ask GPT-4o to evaluate the quality of the output.
\begin{quote}
    Please act as an impartial judge and evaluate the quality of the response provided by an AI assistant to the user question displayed below. Your evaluation should consider factors such as the helpfulness, relevance, accuracy, depth, creativity, and level of detail of the response. Begin your evaluation by providing a short explanation. Be as objective as possible. After providing your explanation, you must rate the response on a scale of 1 to 10 by strictly following this format: ``[[rating]]'', for example: ``Rating: [[5]]''.\\
    \\
    {[}Question{]}\\
    \{question\}\\
    {[}The Start of Assistant’s Answer{]}\\
    \{answer\}\\
    {[}The End of Assistant’s Answer{]}
\end{quote}
The model name is gpt-4o and the model version is 2024-05-13. We set the model temperature, frequency penalty, and presence penalty to 0. 
Overall, we observe the same qualitative result that models trained using the proposed method outperform the model using the BoN sampling.
For the generations of the fine-tuned models we evaluate, the average agreement of GPT-4o evaluation with the Eurus reward model is 0.708 for AlpacaFarm and 0.750 for hh-rlhf datasets.

\begin{figure}
    \centering
    \subfloat[AlpacaFarm]{\includegraphics[width=\columnwidth]{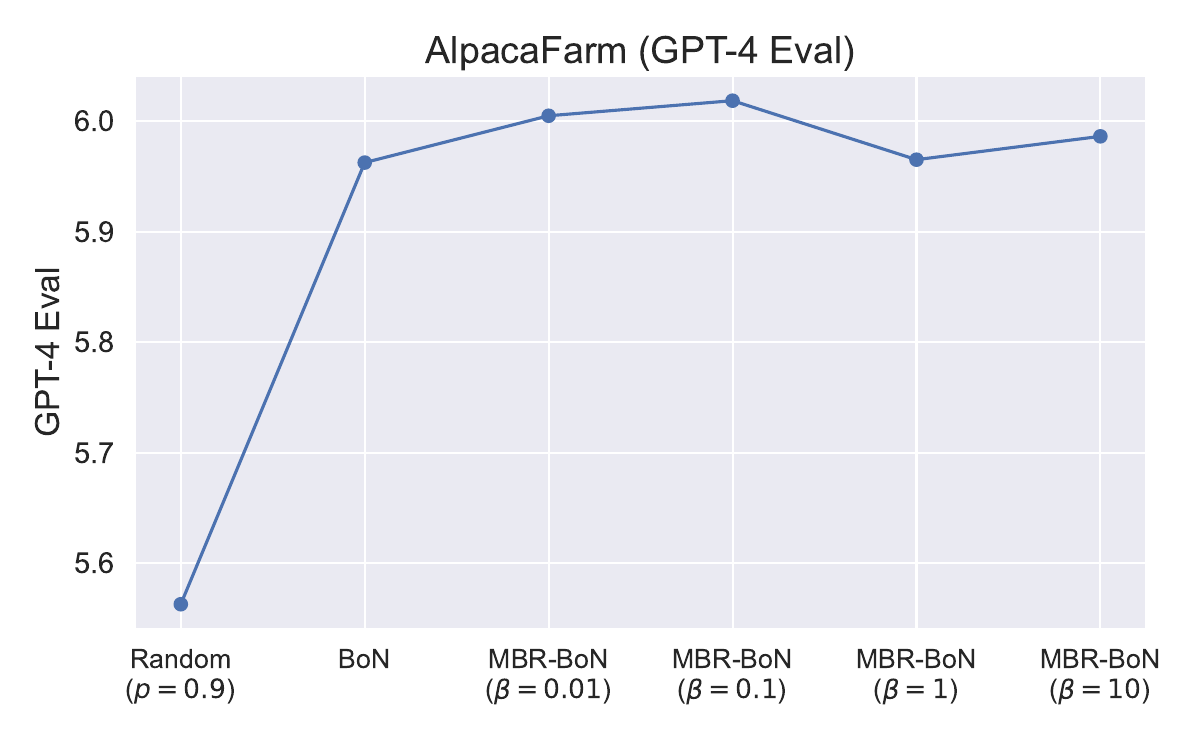}}\\
    \subfloat[Helpfulness]{\includegraphics[width=\columnwidth]{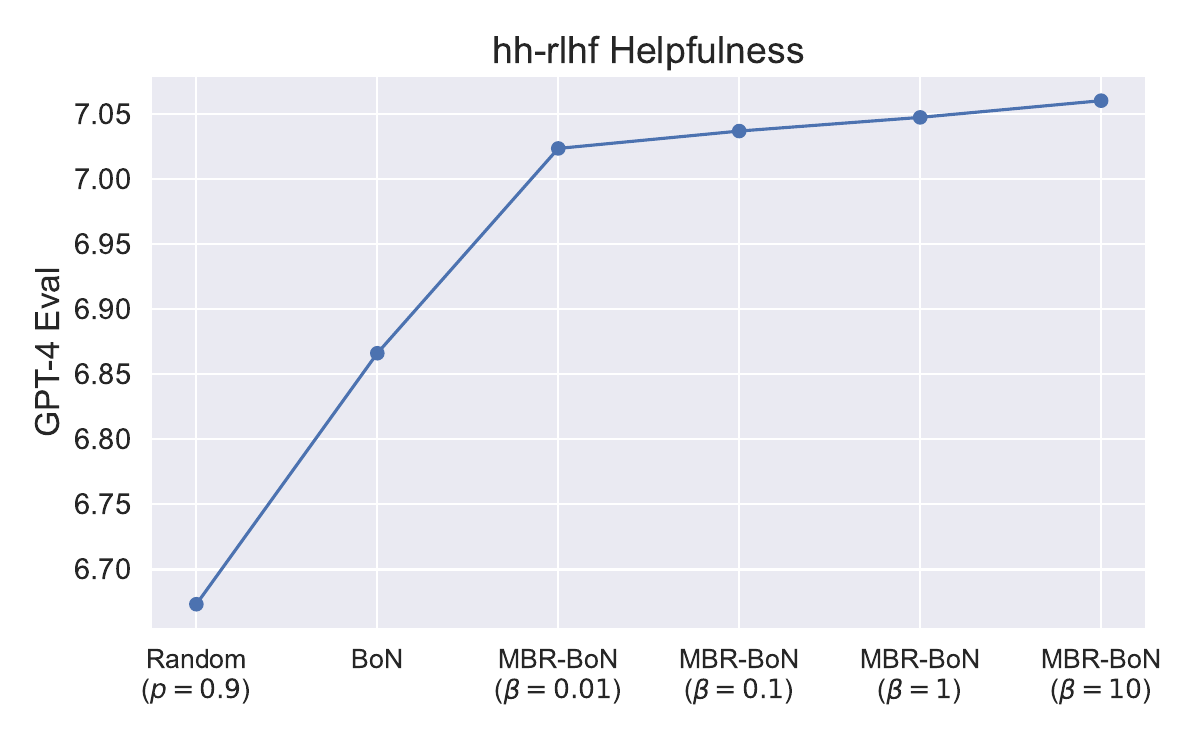}}\\
    \subfloat[Harmlessness]{\includegraphics[width=\columnwidth]{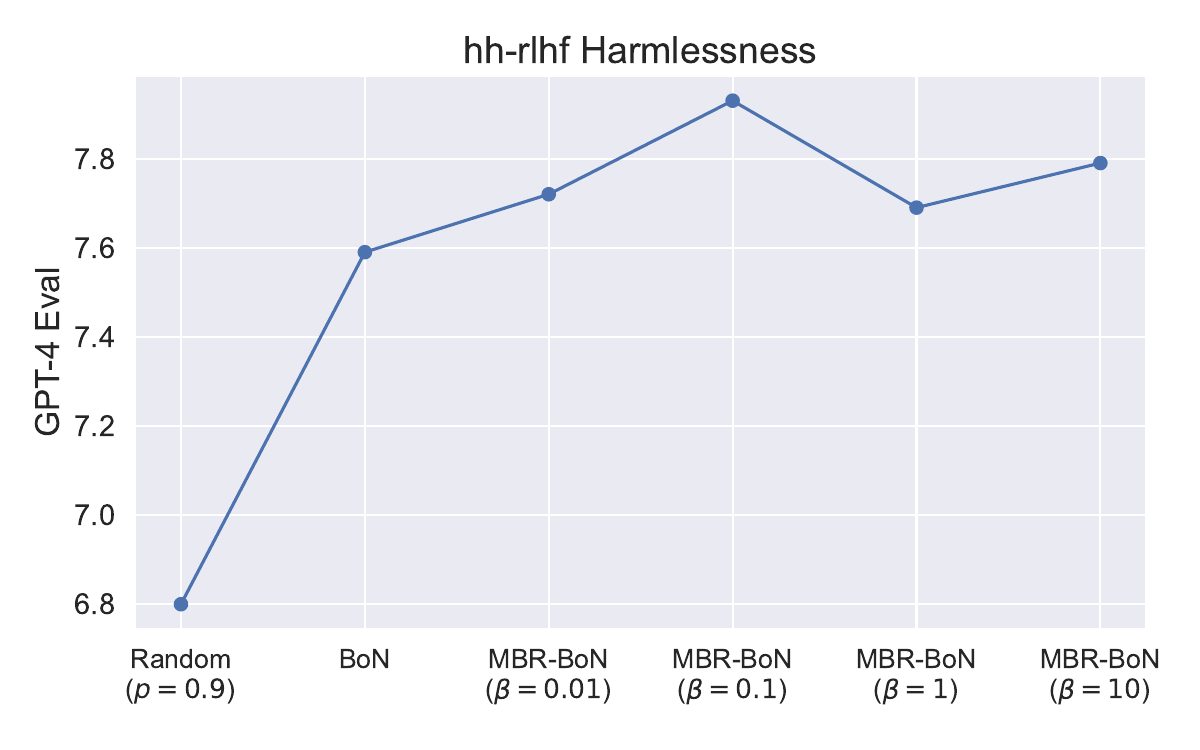}}
    \caption{GPT-4o Evaluation of the fine-tuned models trained using MBR-BoN.}
    \label{fig:gpt4}
\end{figure}

\section{Walltime}
\label{apd:walltime}

We report the wall clock time of BoN and MBR-BoN in Table \ref{tab:walltime}. The batch size for generating samples is set to 4. The code base is based on Huggingface's Transformers library \citep{wolf-etal-2020-transformers} and is not based on a library optimized for inference speed (e.g., vLLM; \citealp{kwon2023efficient}).
We use OASST reward model with the batch size set to 8.
We set the batch size for the computation of the similarity between sequences for the MBR values to 64. 
In our code base, we store the generated samples, computed reward values, and the MBR values to a cloud storage. The reported wall clock time may also include the time for the logging procedures.
The wall clock time depends on various factors including the code base and the hardware. All the experiments are conducted using an NVIDIA T4 GPU.

\begin{table}
    \centering
    \caption{Summary of wall clock time of BoN and MBR-BoN with $N=128$ for AlpacaFarm dataset. All experiements are run on an NVIDIA T4 GPU.}
    \label{tab:walltime}
    \adjustbox{max width=\columnwidth}{
    \begin{tabular}{ccc}
    \toprule
         & \multicolumn{2}{c}{Run time (seconds)} \\
         \cmidrule{2-3}
         & BoN & MBR-BoN \\
         \midrule
        Generate samples & 134 & 134 \\
        Compute the reward values & 0.1 & 0.1 \\
        Compute the MBR values & - & 2 \\
    \bottomrule
    \end{tabular}
    }
\end{table}

\section{Hyperparameters}
\label{apx:hyperparams}

Table \ref{tab:gen-hypers} describes the hyperparameters used to generate responses from the $\pi_\mathrm{ref}$. The parameters are used for both Sections \ref{sec:decoding} and \ref{sec:learning}.
Table \ref{tab:dpo-hypers} summarizes the hyperparameters used for DPO in Section \ref{sec:learning}.

\begin{table}
    \caption{Generation hyperparameters used in Section \ref{sec:decoding} and \ref{sec:learning}}
    \label{tab:gen-hypers}
    \centering
    \begin{tabular}{c|c}
        \toprule
        Parameter & Value \\\midrule
        Max instruction length & 256 \\
        Max new tokens & 256 \\
        Temperature & 1.0 \\
        Top-$p$     & 0.9 \\
         \bottomrule
    \end{tabular}
\end{table}

\begin{table}
    \caption{DPO hyperparameters used in Section \ref{sec:learning}.}
    \label{tab:dpo-hypers}
    \centering
    \begin{tabular}{cc}
        \toprule
        Parameter & Value \\\midrule
        Epochs & 3 \\
        Learning rate & 1e-5 \\
        Optimizer & AdamW \\
        Batch size    & 4 \\
        Regularization factor ($\beta$)    & 0.1 \\
        LoRA $r$  & 128 \\
        LoRA $\alpha$  & 32 \\
         \bottomrule
    \end{tabular}
\end{table}

\begin{table}
    \centering
    \caption{Hyperparameters for training reward models used in Appendix~\ref{apd:overoptimization}. The values follow the defaults of the TRL library.}
    \label{tab:rm-hypers}
    \begin{tabular}{cc}
        \toprule
        Parameter & Value \\\midrule
        Epochs & 3 \\
        Learning rate & 5e-05 \\
        Optimizer & AdamW \\
        Batch size    & 8 \\
        \bottomrule
    \end{tabular}
\end{table}

\section{Reproducibility Statement}

All datasets and models used in the experiments are publicly available except for GPT-4o (Table \ref{tab:links}). 
The code is implemented using Huggingface's Transformers library \citep{wolf-etal-2020-transformers} and TRL library \citep{vonwerra2022trl}.
The PCA and ICA are implemented using scikit-learn \cite{scikit-learn}.
Our code is available at \url{https://github.com/CyberAgentAILab/regularized-bon} with an MIT license.

\begin{table*}
    \caption{List of datasets and models used in the experiments.}
    \label{tab:links}
    \centering
    \begin{tabularx}{\textwidth}{cX}
    \toprule
        Name & Reference \\
    \midrule
        AlpacaFarm & \cite{NEURIPS2023_5fc47800} \url{https://huggingface.co/datasets/tatsu-lab/alpaca_farm} \\\midrule
        Anthropic's hh-rlhf & \cite{bai2022training} \url{https://huggingface.co/datasets/Anthropic/hh-rlhf} \\\midrule
        WMT'21 De-En & \cite{akhbardeh-etal-2021-findings} \url{https://github.com/wmt-conference/wmt21-news-systems} \\\midrule
        mistral-7b-sft-beta (Mistral) & \cite{jiang2023mistral,tunstall2023zephyr} \url{https://huggingface.co/HuggingFaceH4/mistral-7b-sft-beta} \\\midrule
        dolly-v2-3b (Dolly) & \cite{DatabricksBlog2023DollyV2} \url{https://huggingface.co/databricks/dolly-v2-3b} \\\midrule
        Pythia-1B & \cite{pmlr-v202-biderman23a} \url{https://huggingface.co/EleutherAI/pythia-1b} \\\midrule
        Pythia-2.8B & \cite{pmlr-v202-biderman23a} \url{https://huggingface.co/EleutherAI/pythia-2.8b} \\\midrule
        wmt21-dense-24-wide & \cite{tran-etal-2021-facebook} \url{https://huggingface.co/facebook/wmt21-dense-24-wide-x-en} \\\midrule
        SHP-Large & \cite{pmlr-v162-ethayarajh22a} \url{https://huggingface.co/stanfordnlp/SteamSHP-flan-t5-large} \\\midrule
        SHP-XL & \cite{pmlr-v162-ethayarajh22a} \url{https://huggingface.co/stanfordnlp/SteamSHP-flan-t5-xl} \\\midrule
        OASST & \cite{kopf2023openassistant} \url{https://huggingface.co/OpenAssistant/reward-model-deberta-v3-large-v2} \\\midrule
        PairRM & \cite{llm-blender-2023} \url{https://huggingface.co/llm-blender/PairRM} \\\midrule
        Eurus & \cite{yuan2024advancing} \url{https://huggingface.co/openbmb/Eurus-RM-7b} \\\midrule
        MPNet & \cite{song2020mpnet} \url{https://huggingface.co/sentence-transformers/all-mpnet-base-v2} \\\midrule
        wmt20-comet-da & \cite{rei-etal-2020-comet} \url{https://huggingface.co/Unbabel/wmt20-comet-da} \\
        \bottomrule
    \end{tabularx}
\end{table*}

\end{document}